\def\eqref#1{(\ref{#1})}
\def\1{\bm{1}}
\def\cX{{\mathcal{X}}}
\DeclareMathAlphabet{\mathsfit}{\encodingdefault}{\sfdefault}{m}{sl}
\SetMathAlphabet{\mathsfit}{bold}{\encodingdefault}{\sfdefault}{bx}{n}
\def\gF{{\mathcal{F}}}
\def\gG{{\mathcal{G}}}
\def\gH{{\mathcal{H}}}
\def\gN{{\mathcal{N}}}
\def\gO{{\mathcal{O}}}
\def\gW{{\mathcal{W}}}
\def\gX{{\mathcal{X}}}
\def\gY{{\mathcal{Y}}}
\def\sN{{\mathbb{N}}}
\def\sP{{\mathbb{P}}}
\def\sR{{\mathbb{R}}}
\newcommand{\E}{\mathbb{E}}
\newcommand{\Var}{\mathrm{Var}}
\DeclareMathOperator*{\argmax}{arg\,max}
\definecolor{darkblue}{rgb}{0.0,0.0,0.66}  
\definecolor{darkred}{rgb}{100,0.0,0.0} 
\newtheorem{thm}{Theorem}
\newtheorem{definition}[thm]{Definition}
\newtheorem{theorem}[thm]{Theorem}
\newtheorem{lemma}[thm]{Lemma}
\newtheorem{proposition}[thm]{Proposition}
\newtheorem{corollary}[thm]{Corollary}
\newcommand{\Prob}{\textnormal{Prob}}
\newcommand{\Lip}{\textnormal{Lip}}
\newcommand{\W}{\mathcal W}
\title{Measuring Generalization with Optimal Transport}
\author{%
  Ching-Yao Chuang$^\dagger$, Youssef Mroueh$^\ddag$, Kristjan Greenewald$^\S$, \\ \textbf{Antonio Torralba$^\dagger$} \textbf{Stefanie Jegelka$^\dagger$}\\
  {$^\dagger$MIT CSAIL, $^\ddag$IBM Research AI, $^\S$MIT-IBM Watson AI Lab}
  \\
  \texttt{\{cychuang, torralba, stefje\}@mit.edu}
  \\
  \texttt{mroueh@us.ibm.com, kristjan.h.greenewald@ibm.com} 
}
\begin{document}

\maketitle

\begin{abstract}
  Understanding the generalization of deep neural networks is one of the most important tasks in deep learning. Although much progress has been made, theoretical error bounds still often behave disparately from empirical observations. In this work, we develop margin-based generalization bounds, where the margins are normalized with optimal transport costs between independent random subsets sampled from the training distribution. In particular, the optimal transport cost can be interpreted as a generalization of variance which captures the structural properties of the learned feature space. Our bounds robustly predict the generalization error, given training data and network parameters, on large scale datasets. Theoretically, we demonstrate that the concentration and separation of features play crucial roles in generalization, supporting empirical results in the literature. The code is available at \url{https://github.com/chingyaoc/kV-Margin}.
\end{abstract}

\section{Introduction}


Motivated by the remarkable empirical success of deep learning, 
there has been significant effort in statistical learning theory toward deriving generalization error bounds for deep learning, i.e  complexity measures that predict the gap between training  and test errors. Recently, substantial progress has been made, 
e.g., \citep{ arora2018stronger, bartlett2017spectrally,cao2019generalization,dziugaite2017computing,golowich2018size,neyshabur2017pac,wei2019improved}. Nevertheless, many of the current approaches lead to generalization bounds that are often vacuous or not consistent with empirical observations \citep{dziugaite2020search, jiang2019fantastic, nagarajan2019uniform}.

In particular, \citet{jiang2019fantastic} present a large scale study of generalization in deep networks and show that many existing approaches, e.g., norm-based bounds \citep{bartlett2017spectrally, neyshabur2015norm, neyshabur2017pac}, are not predictive of generalization in practice. Recently, the \emph{Predicting Generalization in Deep Learning (PGDL)} competition described in \cite{jiang2020neurips} sought complexity measures that are predictive of generalization error given training data and network parameters. To achieve a high score, the predictive measure of generalization had to be robust to different hyperparameters, network architectures, and datasets. The participants \citep{lassance2020ranking,natekar2020representation, schiff2021gi} achieved encouraging improvement over the classic measures such as VC-dimension \citep{vapnik2015uniform} and weight norm \citep{bartlett2017spectrally}. Unfortunately, despite the good empirical results, these proposed approaches are not yet supported by rigorous theoretical bounds \citep{jiang2021methods}.

In this work, we attempt to decrease this gap between theory and practice with margin bounds based on optimal transport. In particular, we show that the expected optimal transport cost of matching two independent random subsets of the training
distribution is a natural alternative to Rademacher complexity. Interestingly, this optimal transport cost can be interpreted as the $k$-variance \citep{solomon2020k}, a generalized notion of variance that captures the structural properties of the data distribution. Applied to latent space, it captures important properties of the learned feature distribution. The resulting $k$-variance normalized margin bounds can be easily estimated and correlate well with the generalization error on the PGDL datasets \cite{jiang2020neurips}. In addition, our formulation naturally encompasses the gradient normalized margin proposed by \citet{elsayed2018large}, further relating our bounds to the decision boundary of neural networks and their robustness. 

Theoretically, our bounds reveal that the \textit{concentration} and \textit{separation} of learned features are important factors for the generalization of multiclass classification. In particular, the downstream classifier generalizes well if (1) the features within a class are well clustered, and (2) the classes are separable in the  feature space in the Wasserstein sense.


In short, this work makes the following contributions:
\begin{itemize}[leftmargin=0.5cm]\setlength{\itemsep}{-1pt}
\item We develop new margin bounds based on $k$-variance \citep{solomon2020k}, a generalized notion of variance based on optimal transport, which better captures the structural properties of the feature distribution;
\item We propose $k$-variance normalized margins that predict generalization error well on the PGDL challenge data;
\item We provide a theoretical analysis to shed light on the role of feature distributions in generalization, based on our $k$-variance normalized margin bounds.
\end{itemize}

\section{Related Work}
\textbf{Margin-based Generalization Bounds} Classic approaches in learning theory bound the generalization error with the complexity of the hypothesis class \citep{bartlett2002rademacher,vapnik2015uniform}. Nevertheless, previous works show that these uniform convergence approaches are not able to explain the generalization ability of deep neural networks given corrupted labels \citep{zhang2016understanding} or on specific designs of data distributions as in \cite{nagarajan2019uniform}. Recently, substantial progress has been made to develop better data-dependent and algorithm-dependent bounds \citep{arora2018stronger,bartlett2005local, cao2019generalization,chuang2020estimating,neyshabur2018towards, valle2018deep, wei2019improved}. Among them, we will focus on margin-based generalization  bounds for multi-class classification \citep{koltchinskii2002empirical, kuznetsov2015rademacher}. \citet{bartlett2017spectrally} show that margin normalized with the product of spectral norms of weight matrices is able to capture the difficulty of the learning task, where the conventional margin struggles. Concurrently, \citet{neyshabur2017pac} derive spectrally-normalized margin bounds via weight perturbation within a PAC-Bayes framework \citep{mcallester1999pac}. However, empirically, spectral norm-based bounds can correlate negatively with generalization \cite{nagarajan2019uniform, jiang2019fantastic}. \citet{elsayed2018large} present a gradient-normalized margin, which can be interpreted as the first order approximation to the distance to the decision boundary. \citet{jiang2018predicting} further show that gradient-normalized margins, when combined with the total feature variance, are good predictors of the generalization gap. Despite the empirical progress, gradient-normalized margins are not yet supported by theoretical bounds.

\textbf{Empirical Measures of Generalization} Large scale empirical studies have been conducted to study various proposed generalization predictors \citep{jiang2018predicting, jiang2019fantastic}. In particular, \citet{jiang2019fantastic} measure the average correlation between various complexity measures and generalization error under different experimental settings. Building on their study, \citet{dziugaite2020search} emphasize the importance of the robustness of the generalization measures to the experimental setting. These works show that well-known complexity measures such as weight norm \citep{nagarajan2019generalization, neyshabur2015norm}, spectral complexity \citep{bartlett2017spectrally, neyshabur2017pac}, and their variants are often negatively correlated with the generalization gap. Recently, \citet{jiang2020neurips} hosted the \emph{Predicting Generalization in Deep Learning (PGDL)} competition, encouraging the participants to propose robust and general complexity measures that can rank networks according to their generalization errors. Encouragingly, several approaches \citep{lassance2020ranking, natekar2020representation, schiff2021gi} outperformed the conventional baselines by a large margin. Nevertheless, none of these are theoretically motivated with rigorous generalization bounds. Our $k$-variance normalized margins are good empirical predictors of the generalization gap, while also being supported with strong theoretical bounds.
\section{Optimal Transport and $k$-Variance}
Before presenting our generalization bounds with optimal transport, we first give a brief introduction to the Wasserstein distance, a distance function between probability distributions defined via an optimal transport cost. Letting $\mu$ and $\nu \in \Prob(\sR^d)$ be two probability measures, the $p$-Wasserstein distance with Euclidean cost function is defined as
\begin{align*}
    \gW_p(\mu, \nu) = \inf_{\pi \in \Pi(\mu, \nu)} \left( \E_{(X,Y) \sim \pi} \|X-Y\|^p\right)^{1/p},
\end{align*}
where $\Pi(\mu, \nu) \subseteq \Prob(\sR^d \times \sR^d)$  denotes the set of measure couplings whose marginals are $\mu$ and $\nu$, respectively. The 1-Wasserstein distance is also known as the Earth Mover distance. Intuitively, Wasserstein distances measure the minimal cost to transport the distribution $\mu$ to $\nu$. 

Based on the Wasserstein distance, \citet{solomon2020k} propose the  \emph{$k$-variance}, a generalization of variance, to measure structural properties of a distribution beyond variance.
\begin{definition}[Wasserstein-$p$ $k$-variance]
Given a probability measure $\mu \in \Prob(\sR^d)$ and a parameter $k \in \sN$, the \emph{Wasserstein-$p$ $k$-variance} is defined as
\begin{align*}
    \Var_{k, p}(\mu) = c_p(k,d) \cdot \E_{S, \tilde{S} \sim \mu^k} \left[ \gW_p^p(\mu_S, \mu_{\tilde{S}} ) \right],
\end{align*}
where $\mu_S = \frac{1}{k}\sum_{i=1}^k \delta_{x_i}$ for $x_i \overset{\textnormal{i.i.d.}}{\sim} \mu $ and $c_p(k, d)$ is a normalization term described in \cite{solomon2020k}.
\end{definition}
When $k=1$ and $p=2$, the $k$-variance is equivalent to the variance $\mathrm{Var}[X]$ of the random variable $X \sim \mu$. For $k > 1$ and $d \geq 3$, \citet{solomon2020k} show that when $p=2$, the $k$-variance provides an intuitive way to measure the average intra-cluster variance of clustered measures. 
In this work, we use the unnormalized version ($c_p(k,d)=1$) of $k$-variance and $p=1$, and drop the $p$ in the notation:
\begin{align*}
    \textnormal{(Wasserstein-1 $k$-variance):}\quad \Var_{k}(\mu) = \E_{S, \tilde{S} \sim \mu^{k}}\left[ \gW_1(\mu_S, \mu_{\tilde{S}})\right].
\end{align*}
Note that setting  $c_p(k,d)=1$ is not an assumption, but instead an alternative definition of $k$-variance. The change in constant has no effect on any part of our paper, as we could reintroduce the constant of \citet{solomon2020k} and simply include a premultiplication term in the generalization bounds to cancel it out. In Section~\ref{analysis}, we will show that this unnormalized Wasserstein-1 $k$-variance captures the concentration of learned features. Next, we use it to derive generalization bounds.

\section{Generalization Bounds with Optimal Transport}
We present our generalization bounds in the multi-class setting. Let $\gX$ denote the input space and $\gY = \{1,\dots,K\}$ denote the output space. We will assume a compositional hypothesis class $\gF \circ \Phi$, where the hypothesis $f \circ \phi$ can be decomposed as a feature (representation) encoder $\phi \in \Phi$ and a predictor $f \in \gF$. This includes dividing multilayer neural networks at an intermediate layer.

We consider the score-based classifier $f = [f_1, \dots, f_K]$, $f_c \in \gF_c$, where the prediction for $x \in \gX$ is given by $\argmax_{y \in \gY} f_y(\phi(x))$. The margin of $f$ for a datapoint $(x,y)$ is defined by
\begin{align}\label{eq:margin}
    \rho_f(\phi(x),y) := f_y(\phi(x)) - \max_{y^\prime \neq y} f_{y^\prime}(\phi(x)),
\end{align}
where $f$ misclassifies if $\rho_f(\phi(x),y) \leq 0$.
The dataset $S = \{x_i, y_i \}_{i=1}^m$ is drawn i.i.d. from distribution $\mu$ over $\gX \times \gY$. Define $m_c$ as the number of samples in class $c$, yielding $m=\sum_{c=1}^K m_c$. We denote the marginal over a class $c \in \gY$ as $\mu_c$ and the distribution over classes by $p(c)$. The pushforward measure of $\mu$ with respect to $\phi$ is denoted as $\phi_\# \mu$. We are interested in bounding the expected zero-one loss of a hypothesis $f \circ \phi$: $R_\mu(f \circ \phi) = \E_{(x,y) \sim \mu}[\mathbbm{1}_{\rho_f(\phi(x),y) \leq 0}]$ by the corresponding empirical $\gamma$-margin loss $\hat{R}_{\gamma,m}(f \circ \phi) = {\E}_{(x,y) \sim S}[\mathbbm{1}_{\rho_f(\phi(x),y) \leq \gamma}]$. 


\subsection{Feature Learning and Generalization: Margin Bounds with $k$-Variance}

Our theory is motivated by recent progress in feature learning, which suggests that imposing certain structure on the feature distribution improves generalization
\citep{chen2020simple,chuang2020debiased,khosla2020supervised, yan2020clusterfit, zarka2020separation}. The participants \citep{lassance2020ranking, natekar2020representation} of the PGDL competition \citep{jiang2020neurips} also demonstrate nontrivial correlation between feature distribution and generalization.



To study the connection between learned features and generalization, we derive generalization bounds based on the $k$-variance of the feature distribution. In particular, we first derive bounds for a fixed encoder and discuss the generalization error of the encoder at the end of the section. Theorem \ref{thm_margin} provides a generalization bound for neural networks via the concentration of $\mu_c$ in each class.

\begin{theorem}
\label{thm_margin}
Let $f = [f_1, \cdots, f_K]\in \gF = \gF_1\times  \cdots\times \gF_K$ where $\gF_i: \gX \rightarrow \sR$. Fix $\gamma > 0$. The following
bound holds for all $f \in \gF$ with probability at least $1 - \delta > 0$:
\begin{align*}
R_{\mu}(f \circ \phi) \leq \hat{R}_{\gamma,m}(f \circ \phi) + \E_{c \sim p} \left[\frac{\Lip(\rho_{f}(\cdot, c))}{\gamma} \Var_{m_c}(\phi_\#\mu_c) \right] + \sqrt{\frac{\log(1 / \delta)}{2m}},
\end{align*}
where $\Lip(\rho_{f}(\cdot, c))= \sup_{x,x' \in \gX} \frac{|\rho_f(\phi(x),c)-\rho_f(\phi(x'),c)| }{||\phi(x)-\phi(x')||_2}$ is the margin Lipschitz constant w.r.t $\phi$.
\end{theorem}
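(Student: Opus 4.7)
The plan is to (i) dominate the $0$--$1$ loss by a $1/\gamma$-Lipschitz ramp surrogate, (ii) apply Kantorovich--Rubinstein (KR) duality class by class to express the population-minus-empirical gap through a Wasserstein-$1$ distance between the true and empirical feature distributions, (iii) invoke joint convexity of $\gW_1$ to upper-bound the \emph{expected} Wasserstein term by the $k$-variance, and (iv) finish with a standard Hoeffding concentration of the empirical $\gamma$-margin loss.

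For step (i), let $\ell_\gamma(t)=\min\{1,\max\{0,1-t/\gamma\}\}$; it is $1/\gamma$-Lipschitz and sandwiches $\mathbbm{1}_{t\leq 0}\leq \ell_\gamma(t)\leq \mathbbm{1}_{t\leq \gamma}$, so $R_\mu(f\circ\phi)\leq \E_\mu[\ell_\gamma\circ\rho_f]$ and $\E_S[\ell_\gamma\circ\rho_f]\leq \hat R_{\gamma,m}(f\circ\phi)$. For step (ii), let $\hat\mu_c:=\frac{1}{m_c}\sum_{i:y_i=c}\delta_{x_i}$ denote the class-$c$ empirical measure. Conditional on $y=c$, the map $z\mapsto \ell_\gamma(\rho_f(z,c))$ is $\Lip(\rho_f(\cdot,c))/\gamma$-Lipschitz in $z$, so KR duality gives
\[
\int \ell_\gamma(\rho_f(z,c))\,d(\phi_\#\mu_c)(z)-\int \ell_\gamma(\rho_f(z,c))\,d(\phi_\#\hat\mu_c)(z)\leq \frac{\Lip(\rho_f(\cdot,c))}{\gamma}\,\gW_1(\phi_\#\mu_c,\phi_\#\hat\mu_c).
\]
Averaging over classes with weights $p(c)$ (taking $p(c)=m_c/m$ for clarity, with the label-sampling mismatch absorbed in the final Hoeffding step), the empirical side collapses to $\E_S[\ell_\gamma\circ\rho_f]\leq \hat R_{\gamma,m}$, yielding the deterministic pointwise inequality
\[
R_\mu(f\circ\phi)\leq \hat R_{\gamma,m}(f\circ\phi)+\E_{c\sim p}\!\left[\frac{\Lip(\rho_f(\cdot,c))}{\gamma}\,\gW_1(\phi_\#\mu_c,\phi_\#\hat\mu_c)\right].
\]

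For step (iii), I use joint convexity of $\gW_1$ together with the identity $\E_{\tilde S\sim\mu_c^{m_c}}[\phi_\#\hat{\tilde\mu}_c]=\phi_\#\mu_c$ (as measures), where $\hat{\tilde\mu}_c$ is the empirical measure on an independent ghost sample $\tilde S$. Jensen's inequality then gives
\[
\E_S\bigl[\gW_1(\phi_\#\mu_c,\phi_\#\hat\mu_c)\bigr]\leq \E_{S,\tilde S}\bigl[\gW_1(\phi_\#\hat\mu_c,\phi_\#\hat{\tilde\mu}_c)\bigr]=\Var_{m_c}(\phi_\#\mu_c),
\]
and taking expectations in the pointwise bound produces the $S$-free inequality $R_\mu(f\circ\phi)\leq \E_S[\hat R_{\gamma,m}(f\circ\phi)]+\E_{c\sim p}[\Lip(\rho_f(\cdot,c))/\gamma\cdot \Var_{m_c}(\phi_\#\mu_c)]$. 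Step (iv) is then immediate: since $\hat R_{\gamma,m}$ is a mean of $m$ i.i.d.\ $\{0,1\}$-valued indicators, Hoeffding (or McDiarmid with bounded differences $1/m$) gives $\E_S[\hat R_{\gamma,m}]\leq \hat R_{\gamma,m}+\sqrt{\log(1/\delta)/(2m)}$ with probability at least $1-\delta$, and chaining this with the previous inequality yields the theorem.

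The main technical subtlety is why one should concentrate $\hat R_{\gamma,m}$ itself rather than the Wasserstein term: a direct McDiarmid on $\gW_1(\phi_\#\mu_c,\phi_\#\hat\mu_c)$ would produce per-class deviations of order $D\sqrt{\log(K/\delta)/m_c}$ (with $D$ a feature-space diameter), together with a union bound over classes, introducing extra $\Lip/\gamma$ and diameter factors that would spoil the clean $\sqrt{\log(1/\delta)/(2m)}$ constant. A secondary care point is the label-sampling discrepancy $p(c)\ne m_c/m$ when sampling is i.i.d.\ rather than stratified; because $\hat R_{\gamma,m}$ already averages over both label and covariate randomness, this mismatch is absorbed into the same Hoeffding step.
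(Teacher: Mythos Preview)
Your argument is correct for a \emph{fixed} $f$ and takes a genuinely different route from the paper's. The paper applies McDiarmid to the supremum $\sup_{f\in\gF}\bigl[\E_\mu L_\gamma(\rho_f)-\hat\E_S L_\gamma(\rho_f)\bigr]$ first (so the $\sqrt{\log(1/\delta)/2m}$ term is obtained uniformly over $f$), then bounds the expected supremum by ghost-sample symmetrization, and only at the end invokes KR duality to turn the two-sample discrepancy into the $k$-variance. You reverse the order: KR duality pointwise for the given $f$, then the clean Jensen step $\E_S[\gW_1(\phi_\#\mu_c,\phi_\#\hat\mu_c)]\le\Var_{m_c}(\phi_\#\mu_c)$ (using convexity of $\gW_1$ and $\E_{\tilde S}[\phi_\#\hat{\tilde\mu}_c]=\phi_\#\mu_c$) as a substitute for symmetrization, and finally Hoeffding on $\hat R_{\gamma,m}$. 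Your step (iii) is a nice observation; both routes treat the $p(c)$ versus $m_c/m$ issue informally (the paper also effectively writes $\E_{S_1}\cdots\E_{S_K}$ with fixed $m_c$, i.e.\ stratified sampling).

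The one point to flag is uniformity in $f$. The phrase ``for all $f\in\gF$ with probability at least $1-\delta$'' is normally read as a uniform guarantee, and this is what the paper's McDiarmid-on-the-supremum delivers (with the Lipschitz constant in the bound then implicitly being the worst over the class, which the paper's notation somewhat obscures). Your step (iv), by contrast, concentrates $\hat R_{\gamma,m}(f\circ\phi)$ for one $f$ at a time, so the high-probability event depends on $f$; for a data-dependent learned $f$ this is not the same statement. If you want to keep your pointwise-KR-then-Jensen architecture and still get a uniform bound, the fix is to concentrate the $f$-\emph{independent} quantity $\gW_1(\phi_\#\mu_c,\phi_\#\hat\mu_c)$ (or its class-weighted sum) around its expectation via McDiarmid---it has bounded differences of order (feature diameter)$/m_c$, and the deterministic inequality from step (ii) already holds for every $f$ simultaneously. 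That is precisely the route you dismissed in your final paragraph, and the price is exactly the extra diameter and $\Lip/\gamma$ factors you identified there; the paper avoids them only by taking the supremum before concentrating.
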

We give a proof sketch here and defer the full proof to the supplement. 
For a given class $c$ and a given feature map $\phi,$ let $\mathcal{H}_c=\{h(x)=\rho_f{(\phi(x),c)} |f=(f_1\dots f_{K}), f_y \in \mathcal{F}\}$.
The last step in deriving Rademacher-based generalization bounds \citep{bartlett2002rademacher} amounts to bounding for each class $c$: 
\begin{align}
    \label{eq_rademacher_step}
\Delta_{c}=\E_{S, \tilde{S} \sim \mu_c^{m_c}} \left[ \sup_{h \in \gH_c} \frac{1}{m_c} \sum_{i=1}^{m_c} h(\tilde{x}_i) - \frac{1}{m_c} \sum_{i=1}^{m_c} h(x_i)  \right],
\end{align}
where $S, \tilde{S} \sim \mu_c^{m_c}$. Typically we would plug in the Rademacher variable and arrive at the standard Rademacher generalization bound. Instead, our key observation is that the Kantorovich-Rubinstein duality \citep{hormander2006grundlehren} implies  
\begin{align*}
    \gW_1(\mu, \nu) = \sup_{\Lip(h) \leq 1} \E_{x \sim \mu} h(x) - \E_{x \sim \nu} h(x),
\end{align*}
where the supremum is over the $1$-Lipschitz functions $h : \sR^d \rightarrow \sR$. Suppose $\gH_c$ is a subset of $L$-Lipschitz functions. By definition of the supremum, the duality result immediately implies that \eqref{eq_rademacher_step} can be bounded with $k$-variance for $k = m_c$:
\begin{align}\label{eq:delta}
    \Delta_c &\leq L \cdot \E_{S, \tilde{S} \sim \mu_c^{m_c} }[W_{1}(\phi_\#\mu_S, \phi_\#\mu_{\tilde{S}})] = L \cdot \Var_{m_c}(\phi_{\#} \mu_c).
\end{align}

This connection suggests that $k$-variance is a natural alternative to Rademacher complexity if the margin is Lipschitz. The following lemma shows that this holds when the functions $f_j$ are Lipschitz:
\begin{lemma}
The margin $\rho_{f}(.,y)$ is Lipschitz in its first argument if each of the $f_j$ is Lipschitz.
\end{lemma}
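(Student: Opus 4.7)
The plan is to reduce the claim to two standard facts: the pointwise maximum of Lipschitz functions is Lipschitz (with constant at most the largest individual constant), and sums/differences of Lipschitz functions are Lipschitz (with constant at most the sum of the individual constants). Writing $\rho_f(z,y) = f_y(z) - \max_{y' \neq y} f_{y'}(z)$, it suffices to bound these two terms separately.

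First, I would establish the max-stability lemma: for any finite collection of real-valued functions $g_1,\dots,g_n$ on a metric space, one has the elementary inequality
\begin{equation*}
\bigl|\max_i g_i(z) - \max_i g_i(z')\bigr| \;\le\; \max_i \bigl|g_i(z)-g_i(z')\bigr|,
\end{equation*}
which follows from picking the maximizer on either side and noting that $g_{i^\star}(z) - \max_i g_i(z') \le g_{i^\star}(z) - g_{i^\star}(z')$ and symmetrically. Applied with $g_{y'} = f_{y'}$ for $y' \neq y$ and using that each $f_{y'}$ is $L_{y'}$-Lipschitz in its argument, this yields
\begin{equation*}
\Bigl|\max_{y' \neq y} f_{y'}(z) - \max_{y'\neq y} f_{y'}(z')\Bigr| \;\le\; \Bigl(\max_{y' \neq y} L_{y'}\Bigr)\,\|z-z'\|.
\end{equation*}

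Second, I would combine this with the Lipschitz bound on $f_y$ itself using the triangle inequality:
\begin{equation*}
|\rho_f(z,y)-\rho_f(z',y)| \;\le\; |f_y(z)-f_y(z')| + \Bigl|\max_{y' \neq y} f_{y'}(z) - \max_{y' \neq y} f_{y'}(z')\Bigr| \;\le\; \Bigl(L_y + \max_{y' \neq y} L_{y'}\Bigr)\|z-z'\|.
\end{equation*}
In particular, $\mathrm{Lip}(\rho_f(\cdot,y)) \le L_y + \max_{y'\neq y} L_{y'} \le 2\max_j L_j$, proving the lemma.

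There is no real obstacle here; the only subtle step is the max-stability inequality, but that is standard. The main conceptual point to highlight is that this is exactly what lets the Kantorovich--Rubinstein duality bound in \eqref{eq:delta} go through at the level of the margin rather than at the level of individual class scores, so the lemma is really a convenience lemma translating a Lipschitz assumption on $\mathcal{F}$ into one on the margin class $\mathcal{H}_c$ used in Theorem~\ref{thm_margin}.
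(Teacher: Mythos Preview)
Your proposal is correct and follows essentially the same approach as the paper: both arguments hinge on the fact that the pointwise maximum of Lipschitz functions is Lipschitz, established by picking the maximizer at each of the two points and comparing. The paper carries this out directly on the full expression $\rho_f(z,y)-\rho_f(z',y)$ (bounding it above and below by $\pm 2L\|z-z'\|$ assuming a common constant $L$), while you first isolate the max-stability inequality and then combine via the triangle inequality, which yields the slightly sharper constant $L_y + \max_{y'\neq y} L_{y'}$; the underlying idea is identical.
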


The bound in Theorem \ref{thm_margin} is minimized when (a) the $k$-variance of features within each class is small, (b) the classifier has large margin, and (c) the Lipschitz constant of $f$ is small. In particular, (a) and (b) express the idea of \textit{concentration} and \textit{separation} of the feature distribution, which we will further discuss in Section \ref{analysis}. 

Compared to the margin bound with Rademacher complexity \cite{kuznetsov2015rademacher}, Theorem \ref{thm_margin} studies a fixed encoder, allowing the bound to capture the structure of the feature distribution. Although the Rademacher-based bound is also data-dependent, it only depends on the distribution over inputs and therefore can neither capture the effect of label corruption nor explain how the structure of the feature distribution $\phi_{\#}(\mu)$ affects generalization. Importantly, it is also non-trivial to estimate the Rademacher complexity empirically, which makes it hard to apply the bound in practice.


\subsection{Gradient Normalized (GN) Margin Bounds with $k$-Variance}
We next extend our theorem to use the \textit{gradient-normalized margin}, a variation of the margin \eqref{eq:margin} that empirically improves generalization and adversarial robustness  \cite{elsayed2018large,jiang2018predicting}. \citet{elsayed2018large} proposed it to approximate the minimum distance to a decision boundary, and \citet{jiang2018predicting} simplified it to 
\begin{align*}
    \tilde{\rho}_f(\phi(x),y) := \rho_f(\phi(x),y) / (\| \nabla_\phi \rho_f(\phi(x),y)\|_2  + \epsilon),
\end{align*}
where $\epsilon$ is a small value ($10^{-6}$ in practice) that prevents the margin from going to infinity. 
The gradient here is $\nabla_\phi \rho_f(\phi(x),y):=\nabla_\phi f_y(\phi(x))-\nabla_\phi f_{y_{\max}}(\phi(x))$, where ties among the $y_{\max}$ are broken arbitrarily as in \citep{elsayed2018large,jiang2018predicting} (ignoring subgradients).
%
The gradient-normalized margin $\tilde{\rho}_f(x,y)$ can be interpreted as the first order Taylor approximation of the minimum distance of the input $x$ to the decision boundary for the class pair $(y, y^\prime)$ \citep{elsayed2018large}. In particular, the distance is defined as the norm of the minimal perturbation in the input or feature space to make the prediction change. See also Lemma \ref{lemma:RobustSeparation} in the supplement for an interpretation of this margin in terms of robust feature separation. Defining the margin loss $\hat{R}_{\gamma,m}^\nabla(f) = \E_{(x,y) \sim S}[\mathbbm{1}_{\tilde{\rho}_f(\phi(x),y) \leq \gamma}]$, we extend Theorem \ref{thm_margin} to the gradient-normalized margin.


\begin{theorem}[Gradient-Normalized Margin Bound]
\label{thm_gn_margin}
Let $f = [f_1, \cdots, f_k]\in \gF = [\gF_1, \cdots, \gF_k]$ where $\gF_i: \gX \rightarrow \sR$. Fix $\gamma > 0$. Then, for any
$\delta > 0$, the following
bound holds for all $f \in \gF$ with probability at least $1 - \delta > 0$:
\begin{align*}
    R_{\mu}(f \circ \phi) \leq \hat{R}_{\gamma,m}^\nabla(f \circ \phi) + \E_{c \sim p} \left[\frac{\Lip(\tilde{\rho}_{f(.)}(\cdot, c))}{\gamma} \Var_{m_c}(\phi_\#\mu_c) \right] + \sqrt{\frac{\log(1 / \delta)}{2m}},
\end{align*}
where $\Lip(\tilde{\rho}_{f(.)}(\cdot, c))= \sup_{x,x' \in \gX} \frac{|\tilde{\rho}_f(\phi(x),c)-\tilde{\rho}_f(\phi(x'),c)| }{||\phi(x)-\phi(x')||}$ is the Lipschitz constant 
defined w.r.t $\phi$. 
\end{theorem}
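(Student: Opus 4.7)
The plan is to mirror the proof of Theorem~\ref{thm_margin} nearly verbatim, substituting the gradient-normalized margin $\tilde{\rho}_f$ for $\rho_f$ throughout. The sketch given in the paper for the standard margin only uses two properties of the per-class margin functional: that it admits a standard symmetrization step reducing the generalization gap to the quantity $\Delta_c$ in \eqref{eq_rademacher_step}, and that the resulting function class is Lipschitz in $\phi(x)$, so that Kantorovich--Rubinstein duality upgrades the symmetrized supremum into a Wasserstein-$1$ expectation, i.e., a $k$-variance. Since the gradient-normalized margin is a scalar functional of $x$ whose Lipschitz constant (w.r.t.\ $\phi$) is explicitly assumed finite in the statement of the theorem, the exact same pipeline applies.

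First, I would introduce the $\gamma$-ramp surrogate $\psi_\gamma(u)$ that equals $1$ on $u\le 0$, $0$ on $u\ge\gamma$, and interpolates linearly in between; it is $1/\gamma$-Lipschitz and satisfies $\mathbbm{1}_{u\le 0}\le\psi_\gamma(u)\le\mathbbm{1}_{u\le\gamma}$. Applying this with $u=\tilde{\rho}_f(\phi(x),y)$ sandwiches the $0$--$1$ risk between the expected ramp loss and $\hat{R}_{\gamma,m}^{\nabla}(f\circ\phi)$. A standard bounded-differences / McDiarmid argument then contributes the $\sqrt{\log(1/\delta)/(2m)}$ term and reduces the problem, via conditioning on the class label $c\sim p$, to controlling
\begin{align*}
\mathbb{E}_{c\sim p}\left[\,\Delta_c^{\nabla}\right],\qquad
\Delta_c^{\nabla} = \E_{S,\tilde S\sim \mu_c^{m_c}}\!\left[\sup_{h\in\tilde{\mathcal{H}}_c}\frac{1}{m_c}\sum_{i=1}^{m_c}h(\tilde x_i)-\frac{1}{m_c}\sum_{i=1}^{m_c}h(x_i)\right],
\end{align*}
where $\tilde{\mathcal{H}}_c=\{x\mapsto \psi_\gamma(\tilde{\rho}_f(\phi(x),c))\mid f\in\gF\}$.

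Second, I would apply Kantorovich--Rubinstein duality. By construction each $h\in\tilde{\mathcal{H}}_c$ equals $\psi_\gamma\circ\tilde{\rho}_f(\cdot,c)$, and is therefore Lipschitz in $\phi(x)$ with constant at most $\Lip(\tilde{\rho}_{f(\cdot)}(\cdot,c))/\gamma$. Hence for every realization of $S,\tilde S$,
\begin{align*}
\sup_{h\in\tilde{\mathcal{H}}_c}\!\left(\tfrac{1}{m_c}\sum_i h(\tilde x_i)-\tfrac{1}{m_c}\sum_i h(x_i)\right)
\le \frac{\Lip(\tilde{\rho}_{f(\cdot)}(\cdot,c))}{\gamma}\,\gW_1(\phi_{\#}\mu_{S},\phi_{\#}\mu_{\tilde S}).
\end{align*}
Taking expectations over $S,\tilde S\sim\mu_c^{m_c}$ produces $\Var_{m_c}(\phi_{\#}\mu_c)$ times the Lipschitz factor, and averaging over $c\sim p$ gives the stated bound.

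The main obstacle is the Lipschitz step, because $\tilde{\rho}_f=\rho_f/(\|\nabla_\phi\rho_f\|_2+\epsilon)$ is a ratio whose denominator involves a (sub)gradient of a max and is not differentiable everywhere; in principle the constant $\Lip(\tilde{\rho}_f(\cdot,c))$ could blow up. I would handle this by following the convention of \citep{elsayed2018large,jiang2018predicting}: ties in $y_{\max}$ are broken arbitrarily so that $\nabla_\phi\rho_f$ is well-defined pointwise, and the $\epsilon$ offset in the denominator keeps $\tilde{\rho}_f$ bounded and Lipschitz on $\gX$. The resulting Lipschitz constant is then absorbed into the theorem statement exactly as written, and no additional assumption beyond its finiteness is needed. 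With that in hand, the derivation is structurally identical to Theorem~\ref{thm_margin}, the only change being that the duality factor is $\Lip(\tilde{\rho}_{f(\cdot)}(\cdot,c))$ rather than $\Lip(\rho_{f}(\cdot,c))$.
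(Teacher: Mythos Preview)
Your proposal is correct and follows essentially the same route as the paper. The paper's own proof is terser: it isolates the single new ingredient---that $\mathbbm{1}_{\rho_f(\phi(x),y)\le 0}=\mathbbm{1}_{\tilde\rho_f(\phi(x),y)\le 0}$ because dividing by the positive quantity $\|\nabla_\phi\rho_f\|_2+\epsilon$ preserves sign, whence $R_\mu(f\circ\phi)\le \E[L_\gamma(\tilde\rho_f)]$---and then defers entirely to the proof of Theorem~\ref{thm_margin}; you spell out the full McDiarmid/symmetrization/Kantorovich--Rubinstein pipeline, but the argument is identical.
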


\subsection{Estimation Error of $k$-Variance}
So far, our bounds have used the $k$-variance, which is an expectation. Lemma~\ref{lemma_err_encoder} bounds the estimation error when estimating the $k$-variance from data. This may be viewed as a generalization error for the learned features in terms of $k$-variance, motivated by the connections of $k$-variance to test error.
\begin{lemma}[Estimation Error of $k$-Variance / ``Generalization Error'' of the Encoder]
\label{lemma_err_encoder}
 Given a distribution $\mu$ and $n$ empirical samples $\{S^j, \tilde{S}^j\}_{j=1}^n$ where each $S^j, \tilde{S}^j \sim \mu^k$, define the empirical Wasserstein-1 $k$-variance:
  $\widehat{\Var}_{k,n}(\phi_\# \mu) = \frac{1}{n}\sum_{j=1}^n\gW_1(\phi_\# \mu_{S^j},  \phi_\# \mu_{\tilde{S}^j}).$
 Suppose the encoder satisfies $\sup_{x, x^\prime} \| \phi(x) - \phi(x^\prime) \| \leq B$, then  with probability at least $1 - \delta > 0$, we have
\begin{align*}
    \Var_k(\phi_\# \mu) \leq \widehat{\Var}_{k,n}(\phi_\# \mu) + \sqrt{\frac{2B^2\log(1 / \delta)}{nk}}.
\end{align*}
\end{lemma}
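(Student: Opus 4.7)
The plan is to recognize $\widehat{\Var}_{k,n}(\phi_{\#}\mu)$ as a function of $2nk$ i.i.d.\ samples from $\mu$, verify that it has bounded differences, and then apply McDiarmid's inequality.

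First I would observe that the $n$ random variables $Z_j := \gW_1(\phi_\# \mu_{S^j}, \phi_\# \mu_{\tilde{S}^j})$ for $j=1,\dots,n$ are i.i.d.\ with $\E[Z_j] = \Var_k(\phi_\#\mu)$, so $\E[\widehat{\Var}_{k,n}(\phi_\#\mu)] = \Var_k(\phi_\#\mu)$. The diameter hypothesis on $\phi$ implies $Z_j \in [0,B]$, so the crudest route (Hoeffding on the $Z_j$'s) already yields a $\sqrt{\log(1/\delta)/n}$ rate. To obtain the sharper $\sqrt{1/(nk)}$ rate in the statement, one must instead use the finer bounded-differences structure at the level of individual samples.

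The main technical step is to show that, viewed as a function of the $2nk$ underlying samples $\{x_i^j, \tilde{x}_i^j : j\in[n], i\in[k]\}$, the empirical $k$-variance has bounded differences with constants $B/(nk)$. Fix an index and replace $x_i^j$ with some $x'$ while keeping all other samples frozen; this changes only $Z_j$, and by the well-known perturbation property of the Wasserstein-1 distance,
\begin{equation*}
\gW_1(\phi_\#\mu_{S^j}, \phi_\#\mu_{S^j_{\text{new}}}) \;\leq\; \tfrac{1}{k}\|\phi(x_i^j)-\phi(x')\| \;\leq\; \tfrac{B}{k},
\end{equation*}
since one may transport the mass $1/k$ at $\phi(x_i^j)$ to $\phi(x')$ and leave the remaining mass in place. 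By the triangle inequality for $\gW_1$, the value of $Z_j$ itself changes by at most $B/k$, so $\widehat{\Var}_{k,n}$ changes by at most $B/(nk)$. The same bound holds when perturbing any $\tilde{x}_i^j$.

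Given these bounded differences, McDiarmid's inequality applied to the $2nk$ samples yields
\begin{equation*}
\Prob\!\left(\Var_k(\phi_\#\mu) - \widehat{\Var}_{k,n}(\phi_\#\mu) \geq t\right) \;\leq\; \exp\!\left(-\frac{2t^2}{2nk\cdot (B/(nk))^2}\right) \;=\; \exp\!\left(-\frac{nk\,t^2}{B^2}\right),
\end{equation*}
and setting the right-hand side equal to $\delta$ and solving for $t$ produces the claimed high-probability bound. I do not anticipate any serious obstacle beyond carefully establishing the $B/(nk)$ bounded-differences constant; the rest is a direct application of McDiarmid.
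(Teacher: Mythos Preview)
Your proposal is correct and follows essentially the same approach as the paper: both view the empirical $k$-variance as a function of the underlying i.i.d.\ samples, establish a bounded-differences property via the stability of $\gW_1$ under single-point perturbations, and then apply McDiarmid's inequality. The only minor difference is that the paper groups the samples into $nk$ pairs $(x_i^j,\tilde{x}_i^j)$ with bounded-difference constant $2B/(nk)$, whereas you treat the $2nk$ samples individually with constant $B/(nk)$; your finer decomposition in fact yields $t=\sqrt{B^2\log(1/\delta)/(nk)}$, which is slightly sharper than the stated bound and hence implies it.
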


We can then combine Lemma \ref{lemma_err_encoder} with our margin bounds to obtain full generalization bounds. The following corollary states the empirical version of Theorem \ref{thm_margin}: 
\begin{corollary}
\label{cor_full}
Given the setting in Theorem \ref{thm_margin} and Lemma \ref{lemma_err_encoder}, with probability at least $1 - \delta$, for $m=\sum_{c=1}^K \lfloor \frac{m_c}{2n} \rfloor$, $R_{\mu}(f \circ \phi)$ is upper bounded by
\begin{align*}
\hat{R}_{\gamma,m}(f \circ \phi) + \E_{c \sim p_y} \left[\frac{\Lip(\rho_{f}(\cdot, c))}{\gamma} \left(\widehat{\Var}_{\lfloor\frac{ m_c}{2n} \rfloor,n}(\phi_\#\mu_c) + 2B \sqrt{\frac{\log(2K/\delta)}{n\lfloor \frac{m_c}{2n} \rfloor }}\right) \right] + \sqrt{\frac{\log(\frac{2}{\delta})}{2m}}.
\end{align*}

\end{corollary}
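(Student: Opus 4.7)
The plan is to combine Theorem~\ref{thm_margin} and Lemma~\ref{lemma_err_encoder} via a single union bound: the theorem supplies a population-level margin bound whose $k$-variance term is then replaced, one class at a time, by its empirical estimator from the lemma. To this end set $m_c' := \lfloor m_c/(2n) \rfloor$, so that $m = \sum_{c=1}^K m_c'$.

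First I would instantiate Theorem~\ref{thm_margin} using $m_c'$ samples per class and failure probability $\delta/2$, yielding, uniformly over $f \in \gF$ and with probability at least $1-\delta/2$,
\begin{align*}
R_{\mu}(f \circ \phi) \leq \hat{R}_{\gamma,m}(f \circ \phi) + \E_{c \sim p}\!\left[\tfrac{\Lip(\rho_{f}(\cdot,c))}{\gamma}\, \Var_{m_c'}(\phi_\#\mu_c)\right] + \sqrt{\tfrac{\log(2/\delta)}{2m}}.
\end{align*}
Independently, for each class $c \in \{1,\dots,K\}$, I would apply Lemma~\ref{lemma_err_encoder} with $k = m_c'$ and $n$ pairs, which is feasible since $2n m_c' \leq m_c$ i.i.d.\ samples from $\mu_c$ are available in the training set. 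Taking per-class failure probability $\delta/(2K)$ gives, with probability at least $1 - \delta/(2K)$,
\begin{align*}
\Var_{m_c'}(\phi_\#\mu_c) \leq \widehat{\Var}_{m_c',n}(\phi_\#\mu_c) + B\sqrt{\tfrac{2\log(2K/\delta)}{n\, m_c'}}.
\end{align*}

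Finally, a union bound over the $K+1$ events yields combined failure probability at most $\delta/2 + K \cdot \delta/(2K) = \delta$, so on the complementary event both displays hold simultaneously. Substituting the per-class estimation bound into the expectation over $c$ in the margin bound, and using $\sqrt{2}\,B \leq 2B$ to match the stated constant, reproduces the corollary verbatim. The only genuine subtlety is sample bookkeeping: $\widehat{\Var}_{m_c',n}$ must be computed on a valid collection of $n$ i.i.d.\ pairs of size $m_c'$ drawn from $\mu_c$, which one can secure either by carving out a disjoint subset of the class-$c$ training data of size $2n m_c'$ or by reusing the training samples directly, since Lemma~\ref{lemma_err_encoder} is purely a concentration statement in those samples and its event is handled jointly with the theorem's event via the union bound above.
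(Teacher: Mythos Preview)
Your proposal is correct and follows essentially the same approach as the paper's own proof: apply Theorem~\ref{thm_margin} with confidence $\delta/2$, apply Lemma~\ref{lemma_err_encoder} to each of the $K$ classes with confidence $\delta/(2K)$, and combine via a union bound. Your write-up is in fact more careful than the paper's, explicitly tracking the constant $\sqrt{2}B \le 2B$ and the sample-allocation issue, both of which the paper glosses over.
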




Note that the same result holds for the gradient normalized margin $\tilde{\rho}_f$. The proof of this corollary is a simple application of a union bound on the concentration of the $k$-variance for each class and the concentration of the empirical risk. We end this section by bounding the variance of the empirical $k$-variance. While \citet{solomon2020k} proved a high-probability concentration result using McDiarmid's inequality, we here use the Efron-Stein inequality to directly bound the variance.
\begin{theorem}[Empirical variance]\label{thm:varVanillaESW1M}
Given a distribution $\mu$ and an encoder $\phi$, we have
 \begin{align*}
\mathrm{Var}\left[ 
\widehat{\mathrm{Var}}_{k,n}(\phi_{\#}\mu)\right] \leq \frac{4 \Var_{\mu}(\phi(X)) }{nk},
\end{align*}
where $\Var_{\mu}(\phi(X))= \mathbb{E}_{x\sim \mu}[ ||\phi(x)-\mathbb{E}_{x\sim \mu} \phi(x)||^2]$ is the variance of  $\phi_{\#}\mu$.
 \end{theorem}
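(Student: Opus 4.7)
The plan is to apply the Efron--Stein inequality, as foreshadowed by the theorem statement. First I would exploit the fact that $\widehat{\Var}_{k,n}(\phi_\#\mu)$ is an average of $n$ i.i.d.\ terms: since the pairs $(S^j,\tilde S^j)$ are independent across $j$,
\begin{align*}
\Var\bigl[\widehat{\Var}_{k,n}(\phi_\#\mu)\bigr] \;=\; \frac{1}{n}\,\Var\bigl[\gW_1(\phi_\#\mu_S,\phi_\#\mu_{\tilde S})\bigr].
\end{align*}
This reduces the problem to controlling the variance of a single Wasserstein distance between two empirical measures of size $k$.

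Next I would view
\begin{align*}
g(x_1,\dots,x_k,\tilde x_1,\dots,\tilde x_k) \;:=\; \gW_1(\phi_\#\mu_S,\phi_\#\mu_{\tilde S})
\end{align*}
as a function of $2k$ i.i.d.\ samples and derive a one-coordinate stability bound. Replacing $x_i$ by an independent copy $x_i'$ to obtain $S'$, the triangle inequality for $\gW_1$ yields
\begin{align*}
\bigl|\,g(S,\tilde S)-g(S',\tilde S)\,\bigr| \;\leq\; \gW_1(\phi_\#\mu_S,\phi_\#\mu_{S'}).
\end{align*}
Since $\mu_S$ and $\mu_{S'}$ differ in only one atom, I would exhibit the feasible coupling that transports mass $1/k$ from $\phi(x_i)$ to $\phi(x_i')$ and leaves the remaining $k-1$ atoms fixed, giving
\begin{align*}
\gW_1(\phi_\#\mu_S,\phi_\#\mu_{S'}) \;\leq\; \tfrac{1}{k}\,\|\phi(x_i)-\phi(x_i')\|.
\end{align*}
The same bound holds when perturbing any $\tilde x_j$.

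Putting everything together via Efron--Stein, summing the squared perturbations over the $2k$ coordinates gives
\begin{align*}
\Var[g] \;\leq\; \sum_{i=1}^{2k} \E\bigl[(g-g^{(i)})^2\bigr] \;\leq\; \frac{2k}{k^2}\,\E\|\phi(X)-\phi(X')\|^2,
\end{align*}
where $X,X'\overset{\textnormal{i.i.d.}}{\sim}\mu$. The final algebraic step is the identity $\E\|\phi(X)-\phi(X')\|^2 = 2\Var_\mu(\phi(X))$ for i.i.d.\ copies, which turns the right-hand side into $\tfrac{4}{k}\Var_\mu(\phi(X))$; combined with the $1/n$ factor from the first step this yields the claim.

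The main obstacle is the stability step — specifically, justifying that a one-point change in the empirical measure moves the $\gW_1$ distance by at most $\|\phi(x_i)-\phi(x_i')\|/k$. The construction of an explicit transport plan makes this transparent, and the triangle inequality for $\gW_1$ handles the fact that both empirical measures are random. A convenient feature of this route is that no boundedness hypothesis on $\phi$ is needed (unlike the McDiarmid argument used by \citet{solomon2020k}), which matches the statement of the theorem, where only $\Var_\mu(\phi(X))$ appears on the right-hand side.
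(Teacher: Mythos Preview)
Your proposal is correct and follows essentially the same approach as the paper: apply the Efron--Stein inequality after establishing a one-point stability bound for $\gW_1$ between empirical measures. The only cosmetic differences are that you first factor out the $1/n$ via independence and then run Efron--Stein over the $2k$ individual samples (bounding stability via the triangle inequality and an explicit coupling), whereas the paper runs Efron--Stein directly over the $nk$ sample \emph{pairs} and bounds stability via Kantorovich--Rubinstein duality, which introduces a cross term $(\E\|x-x'\|)^2$ that is then absorbed by Jensen; your organization is slightly cleaner and, had you kept the $\tfrac{1}{2}$ in Efron--Stein, would in fact yield the sharper constant $2/(nk)$.
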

Theorem \ref{thm:varVanillaESW1M} implies that if the feature distribution $\phi_\# \mu$ has bounded variance, the variance of the empirical $k$-variance decreases as $k$ and $n$ increase. The values of $k$ we used in practice were large enough that the empirical variance of $k$-variance was small even when we set $n=1$.


\section{Measuring Generalization with Normalized Margins}
\label{sec_measure}
We now empirically compare the generalization behavior of neural networks to the predictions of our margin bounds. To provide a unified view of the bound, we set the second term in the right hand side of the bound to a constant. For instance, for Theorem \ref{thm_margin}, we choose
$\gamma = \gamma_0 \cdot \E_{c \sim p} \left[ \Lip(\rho_f(\cdot, c)) \cdot \Var_{m_c}(\phi_\#\mu_{c}) \right]$,
yielding
$R_{\mu}(f \circ \phi) \leq \hat{R}_{\gamma,m}(f \circ \phi)+ 1 / \gamma_0+\gO(m^{-1/2})$,
where
\begin{align*}
    \hat{R}_{\gamma,m}(f \circ \phi) = \hat{\E}_{(x,y) \sim S} \left[\;\mathbbm{1}\left( \rho_{f}(\phi(x),y)/\E_{c \sim p} \left[ \Lip(\rho_f(\cdot, c)) \cdot \Var_{m_c}(\phi_\#\mu_{c}) \right] \leq \gamma_0 \right)\; \right].
\end{align*}

and $\mathbbm{1}(\cdot)$ is the indicator function. This implies the model generalizes better if the normalized margin is larger. 
We therefore consider the distribution of the $k$-variance normalized margin, where each data point is transformed into a single scalar via 
\begin{align*}
    \frac{\rho_f(\phi(x),y)}{\E_{c\sim p}\left[\widehat{\Var}_{\lfloor \frac{m_c}{2} \rfloor, 1}(\phi_\#\mu_c) \cdot \widehat{\Lip}(\rho_f(\cdot, c))\right]} \quad \textnormal{and} \quad  \frac{\tilde{\rho}_f(\phi(x),y)}{\E_{c \sim p}\left[ \widehat{\Var}_{\lfloor \frac{m_c}{2} \rfloor, 1}(\phi_\#\mu_c) \cdot \widehat{\Lip}(\tilde{\rho}_f(\cdot, c)) \right]},
\end{align*}
respectively. For simplicity, we set $k$ and $n$ as $k=\lfloor m_c/2 \rfloor$ and $n=1$. We refer to these normalized margins as $k$-Variance normalized Margin ($k$V-Margin) and $k$-Variance Gradient Normalized Margin ($k$V-GN-Margin), respectively. 

It is NP-hard to compute the exact Lipschitz constant of ReLU networks \citep{scaman2018lipschitz,LipchitzRelu}.
Various approaches have been proposed to estimate the Lipschitz constant for ReLU networks \citep{LipchitzRelu,fazlyab2019efficient}, however they remain computationally expensive. As we show in Appendix \ref{app:upper_lowerBound}, a naive spectral upper  bound on the Lipschitz constant leads to poor results in predicting generalization. On the other hand, as observed by \citep{LipchitzRelu}, a simple lower bound can be obtained for the Lipschitz constant of ReLU networks by taking the supremum of the norm of the Jacobian on the training set.\footnote{In general, the Lipschitz constant of smooth, scalar valued functions is equal to the supremum of the norm of the input Jacobian in the domain \citep{federer2014geometric, LipchitzRelu, scaman2018lipschitz}.} 
Letting $y^\ast = \argmax_{y \neq c} f_{y}(\phi(x))$, the Lipschitz constant of the margin can therefore be empirically approximated as 
\begin{align*}
    \widehat{\Lip}(\rho_f(\cdot, c)) 
    := \max_{x \in S_c} \| \nabla_x f_c(\phi(x)) - \nabla_x f_{y^\ast}(\phi(x)) \|,
\end{align*}
where $S_c = \{(x_i, y_i) \in S \;|\; y_i = c\}$ is the set of empirical samples for class $c$ (as noted in \cite{LipchitzRelu}, although this does not lead to correct computation of Jacobians for ReLU networks, it empirically performs well).
In practice, we take the maximum over samples in the training set. 
We refer the reader to \citep{naor2017lipschitz} and \citep{vacher2021dimension} for an analysis of the estimation error of the Lipschitz constant from finite subsets. 
In the supplement (App. \ref{app:lipschitz}), we show that for piecewise linear hypotheses such as ReLU networks, the norm of the Jacobian of the gradient-normalized margin is very close to $1$ almost everywhere.
We thus simply set the Lipschitz constant to $1$ for the gradient-normalized margin. 

\subsection{Experiment: Predicting Generalization in Deep Learning}

\begin{table*}[!t]
\small
\begin{center}{%
\begin{tabularx}{0.98\textwidth}{l| *{8}{c}}
\hline
 &  \fontsize{8pt}{8pt}\selectfont\textbf{CIFAR} & \fontsize{8pt}{8pt}\selectfont\textbf{SVHN} & \fontsize{8pt}{8pt}\selectfont\textbf{CINIC} & \fontsize{8pt}{8pt}\selectfont\textbf{CINIC} & \fontsize{8pt}{8pt}\selectfont\textbf{Flowers} &
 \fontsize{8pt}{8pt}\selectfont\textbf{Pets} &
 \fontsize{8pt}{8pt}\selectfont\textbf{Fashion} &
 \fontsize{8pt}{8pt}\selectfont\textbf{CIFAR}
 \\
 & VGG & NiN & FCN bn & FCN & NiN & NiN & VGG & NiN \\
\hline
\hline
Margin$^\dagger$ & 13.59 & 16.32 & 2.03 & 2.99 & 0.33 & 1.24 & 0.45 & 5.45
\\
SN-Margin$^\dagger$ \citep{bartlett2017spectrally} & 5.28 & 3.11 & 0.24 & 2.89 & 0.10 & 1.00 & 0.49 & 6.15
\\
GN-Margin 1st \citep{jiang2018predicting} & 3.53 & 35.42 & 26.69 & 6.78 & 4.43 & 1.61 & 1.04 & 13.49
\\
GN-Margin 8th \citep{jiang2018predicting} & 0.39 & 31.81 & 7.17 & 1.70 & 0.17 & 0.79 & 2.12 & 1.16
\\
TV-GN-Margin 1st \citep{jiang2018predicting} & 19.22 & 36.90 & 31.70 & 16.56 & 4.67 & 4.20 & 0.16 & \textbf{25.06}
\\
TV-GN-Margin 8th \citep{jiang2018predicting} & 38.18 & 41.52 & 6.59 & 16.70 & 0.43 & 5.65 & \textbf{2.35} & 10.11
\\
\hline
\hline
$k$V-Margin$^\dagger$ 1st& 5.34 & 26.78 & \textbf{37.00} & \textbf{16.93} & \textbf{6.26} & 2.07 & 1.82 & 15.75
\\
$k$V-Margin$^\dagger$ 8th & 30.42 & 26.75 & 6.05 & 15.19 & 0.78 & 1.76 & 0.33 & 2.26
\\
$k$V-GN-Margin$^\dagger$ 1st& 17.95 & 44.57 & 30.61 & 16.02 & 4.48 & 3.92 & 0.61 & 21.20
\\
$k$V-GN-Margin$^\dagger$ 8th & \textbf{40.92} & \textbf{45.61} & 6.54 & 15.80 & 1.13 & \textbf{5.92} & 0.29 & 8.07
\\
\hline
\end{tabularx}}
\end{center}
\caption{\textbf{Mutual information scores on PGDL tasks.} We compare different margins across tasks in PGDL. The first and second rows indicate the datasets and the architecture types used by tasks. The methods that are supported with theoretical bounds are marked with $^\dagger$. Our $k$-variance normalized margins outperform the baselines in 6 out of 8 tasks in PGDL dataset.
}
\label{table_pgdl_1}
\vskip -0.15in
\end{table*}

We evaluate our margin bounds on the Predicting Generalization in Deep Learning (PGDL) dataset \cite{jiang2020neurips}. The dataset consists of 8 tasks, each task contains a collection of models trained with different hyperparameters. The models in the same task share the same dataset and model type, but can have different depths and hidden sizes.
The goal is to find a \textit{complexity measure} of networks that correlates with their generalization error. In particular, the complexity measure maps the model and training dataset to a real number, where the output should rank the models in the same order as the generalization error. The performance is then measured by the Conditional Mutual Information (CMI). Intuitively, CMI measures the \textit{minimal} mutual information between complexity measure and generalization error conditioned on different sets of hyperparameters. To achieve high CMI, the measure must be robust to all possible settings including different architectures, learning rates, batch sizes, etc. 
Please refer to \citep{jiang2020neurips} for details.

\paragraph{Experimental Setup.} We compare our $k$-variance normalized margins ($k$V-Margin and $k$V-GN-Margin) with Spectrally-Normalized Margin (SN-Margin) \citep{bartlett2017spectrally}, Gradient-Normalized Margin (GN-Margin) \citep{elsayed2018large}, and total-variance-normalized GN-Margin (TV-GN-Margin) \citep{jiang2018predicting}. Note that the (TV-GN-Margin) of \citep{jiang2018predicting} corresponds to $\frac{\tilde{\rho}_f(\phi(x),y)}{\sqrt{\Var_{x\sim \mu}(||\phi(x)||^2)}} $. Comparing to our $k$V-GN-Margin, our normalization is theoretically motivated and involves the Lipschitz constants of $f$ as well as the generalized notion of $k$-variance. As some of the margins are defined with respect to different layers of networks, we present the results with respect to the shallow layer (first layer) and the deep layer (8th layer if the number of convolutional layers is greater than 8, otherwise the deepest convolutional layer). To produce a scalar measurement, we use the median to summarize the margin distribution, which can be interpreted as finding the margin $\gamma$ that makes the margin loss $\approx 0.5$. We found that using expectation or other quantiles leads to similar results. The Wasserstein-1 distance in $k$-variance is computed exactly, with the linear program in the POT library \citep{flamary2021pot}. All of our experiments are run on 6 TITAN X (Pascal) GPUs.

To ease the computational cost, all margins and k-variances are estimated with random subsets of size $\min(200 \times \#\textnormal{classes},\textnormal{data\_size})$ sampled from the training data. The average results  over 4 subsets are shown in Table \ref{table_pgdl_1}. Standard deviations are given in App. \ref{app:std}, as well as the effect of varying the size of the subset in App. \ref{app:effectk} . Our $k$-variance normalized margins outperform the baselines in 6 out of 8 tasks. Notably, our margins are the only ones  achieving good empirical performance while being supported with theoretical bounds.

\textbf{Margin Visualization.} To provide a qualitative comparison, we select four models from the first task of PGDL (CIFAR10/VGG), which have generalization error 24.9\%, 26.2\%, 28.6\%, and 31.8\%, respectively. We visualize margin distributions for each in Figure \ref{fig_pgdl}. Without proper normalization, Margin and GN-Margin struggle to discriminate these models. Similar to the observation in \citep{jiang2019fantastic}, SN-Margin even negatively correlates with generalization. Among all the apporaches, $k$V-GN-Margin is the only measure that correctly orders and distinguishes between all four models. This is consistent with Table \ref{table_pgdl_1}, where $k$V-GN-Margin achieves the highest score.

\begin{figure*}[htbp]
\begin{center}   
\includegraphics[width=0.95\linewidth]{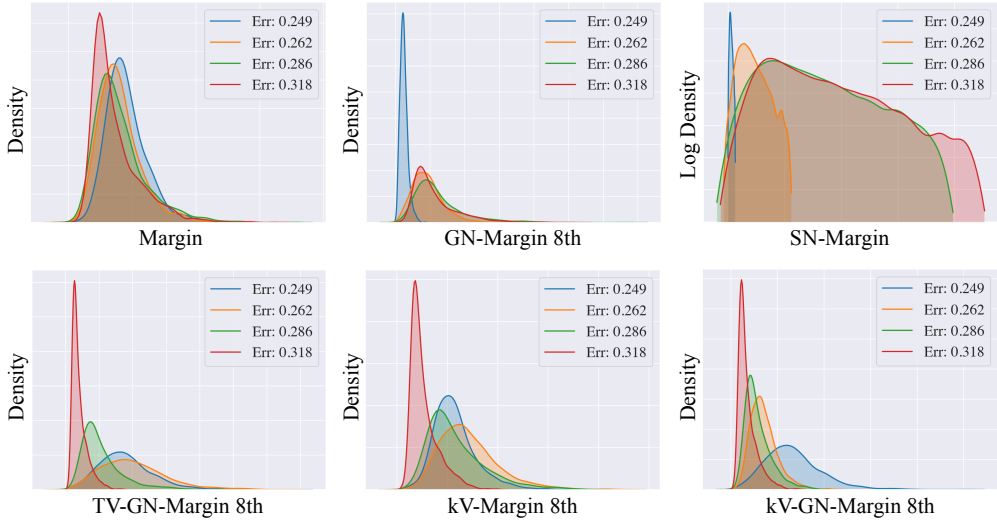}
\end{center}
\caption{\textbf{Margin Visualization of PGDL Models.} From left to right, the correct order of the margin distributions should be red, green, orange, and blue. $k$V-GN-Margin is the only measure that behaves consistently with the generalization error.} 
\label{fig_pgdl}
\end{figure*}

Next, we compare our approach against the winning solution of the PGDL competition: Mixup$^\ast$DBI \cite{natekar2020representation}. Mixup$^\ast$DBI uses the geometric mean of the Mixup accuracy \citep{zhang2017mixup} and Davies Bouldin Index (DBI) to predict generalization. In particular, they use DBI to measure the clustering quality of intermediate representations of neural networks. For fair comparison, we calculate the geometric mean of the Mixup accuracy and the median of the $k$-variance normalized margins and show the results in Table \ref{table_pgdl_2}. Following \citep{natekar2020representation}, all approaches use the representations from the first layer. Our Mixup$^\ast k$V-GN-Margin outperforms the state-of-the-art \cite{natekar2020representation} in 5 out of the 8 tasks.

\begin{table*}[!ht]
\small
\begin{center}
\begin{tabularx}{0.98\textwidth}{l|*{8}{c}}
\hline
 &  \fontsize{8pt}{8pt}\selectfont\textbf{CIFAR} & \fontsize{8pt}{8pt}\selectfont\textbf{SVHN} & \fontsize{8pt}{8pt}\selectfont\textbf{CINIC} & \fontsize{8pt}{8pt}\selectfont\textbf{CINIC} & \fontsize{8pt}{8pt}\selectfont\textbf{Flowers} &
 \fontsize{8pt}{8pt}\selectfont\textbf{Pets} &
 \fontsize{8pt}{8pt}\selectfont\textbf{Fashion} &
 \fontsize{8pt}{8pt}\selectfont\textbf{CIFAR}
 \\
 & VGG & NiN & FCN bn & FCN & NiN & NiN & VGG & NiN \\
\hline
\hline
Mixup$^\ast$DBI \citep{natekar2020representation} & 0.00 & 42.31 & 31.79 & 15.92 & \textbf{43.99} & \textbf{12.59} & \textbf{9.24} & 25.86
\\
Mixup$^\ast k$V-Margin& 7.37 & 27.76 & \textbf{39.77} & 20.87 & 9.14 & 4.83 & 1.32 & 22.30
\\
Mixup$^\ast k$V-GN-Margin& \textbf{20.73} & \textbf{48.99} & 36.27 & \textbf{22.15} & 4.91 & 11.56 & 0.51 & \textbf{25.88}
\\
\hline
\end{tabularx}
\end{center}
\caption{\textbf{Mutual information scores on PGDL tasks with Mixup.} We compare with the winner (Mixup$^\ast$DBI) of the PGDL competition \citep{jiang2020neurips}. Scores of Mixup$^\ast$DBI from \citep{natekar2020representation}.
}
\label{table_pgdl_2}
\end{table*}


\subsection{Experiment: Label Corruption}
\label{exp_label_corrupt}

A sanity check proposed in \cite{zhang2016understanding} is to examine whether the generalization measures are able to capture the effect of label corruption. Following the experiment setup in \citep{zhang2016understanding}, we train two Wide-ResNets \citep{zagoruyko2016wide}, one with true labels (generalization error $=12.9\%$) and one with random labels (generalization error $=89.7\%$) on CIFAR-10 \citep{krizhevsky2009learning}. Both models achieve 100\% training accuracy. We select the feature from the second residual block to compute all the margins that involve intermediate features and show the results in Figure \ref{fig_cifar_random}. Without $k$-variance normalization, margin and GN-Margin can hardly distinguish these two cases, while $k$-variance normalized margins correctly discriminate them. 


\begin{figure*}[htbp]
\begin{center}   
\includegraphics[width=\linewidth]{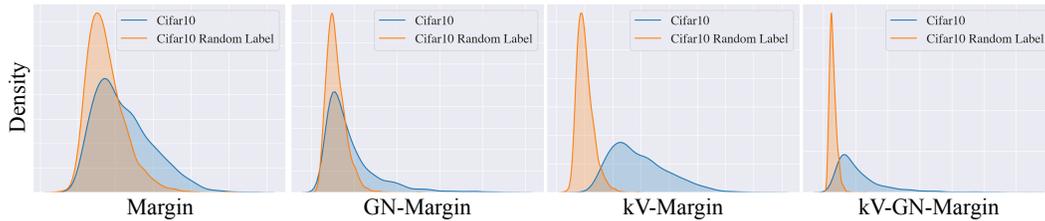}
\end{center}
\caption{\textbf{Margin distributions with clean or random labels.} Without $k$-variance normalization, Margin and GN-Margin struggle to distinguish the models trained with clean labels or random labels. } 
\label{fig_cifar_random}
\end{figure*}

\subsection{Experiment: Task Hardness}
\label{exp_task_hard}
Next, we demonstrate our margins are able to measure the ``hardness'' of learning tasks. We say that a learning task is hard if the generalization error appears large for well-trained models. Different from the PGDL benchmark, where only models trained on the same dataset are compared, we visualize the margin distributions of Wide-ResNets trained on CIFAR-10 \citep{krizhevsky2009learning}, SVHN \citep{netzer2011reading}, and MNIST \citep{lecun-mnisthandwrittendigit-2010}, which have generalization error 12.9\%, 5.3\%, and 0.7\%, respectively. The margins are measured on the respective datasets. In Figure \ref{fig_datasets}, we again see that $k$-variance normalized margins reflect the hardness better than the baselines. For instance, CIFAR-10 and SVHN are indicated to be harder than MNIST as the margins are smaller.

\begin{figure*}[htbp]
\begin{center}   
\includegraphics[width=\linewidth]{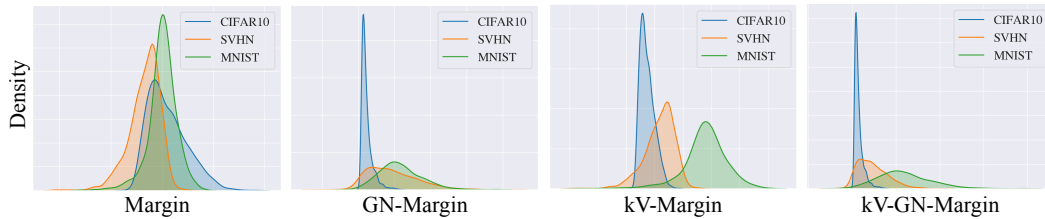}
\end{center}
\caption{\textbf{CIFAR-10, SVHN, and MNIST have different hardness.} Although models achieve 100\% training accuracy on each task, the test accuracy differs. With $k$-variance normalization, the margin distributions of the models are able to recognize the hardness of the tasks.} 
\label{fig_datasets}
\end{figure*}

\section{Analysis: Concentration and Separation of Representations}
\label{analysis}
\subsection{Concentration of Representations}
In this section, we study how the structural properties of the feature distributions enable fast learning. Following the Wasserstein-2 $k$-variance analysis in \cite{solomon2020k}, we apply bounds by \citet{weed2017sharp} to demonstrate the fast convergence rate of Wasserstein-1 $k$-variance when (1) the distribution has low intrinsic dimension or (2) the support is clusterable. 
%
\begin{proposition}
\label{prop_cover} \textnormal{\textbf{(Low-dimensional Measures, Informal)}}
For $\phi_\# \mu \in \Prob(\sR^d)$, we have $\Var_m(\phi_\# \mu) \leq \gO( m^{-1/ d})$ for $d>2$. If $\phi_\# \mu$ is supported on an approximately $d'$-dimensional set where $d' < d$, we obtain a better rate: $\Var_m(\phi_\# \mu) \leq \gO( m^{-1/ d'})$.
\end{proposition}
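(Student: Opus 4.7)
The plan is to reduce the two-sample $k$-variance to the classical one-sample empirical Wasserstein convergence rates of \citet{weed2017sharp}, which are stated in terms of an intrinsic (covering/Wasserstein) dimension of the measure.

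First, I would apply the triangle inequality for $\gW_1$: writing $\mu' = \phi_\# \mu$ for brevity,
\begin{align*}
\gW_1(\mu'_S, \mu'_{\tilde S}) \;\leq\; \gW_1(\mu'_S, \mu') + \gW_1(\mu', \mu'_{\tilde S}).
\end{align*}
Taking expectations over $S, \tilde S \sim (\mu')^m$ and using that $S$ and $\tilde S$ are i.i.d.\ copies gives
\begin{align*}
\Var_m(\phi_\# \mu) \;\leq\; 2\,\E_{S \sim (\mu')^m}\bigl[\gW_1(\mu'_S, \mu')\bigr].
\end{align*}
So it suffices to bound the one-sample empirical Wasserstein-1 convergence rate, which is exactly the quantity studied by \citet{weed2017sharp}.

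Next, I would invoke the rates from \citet{weed2017sharp}. For a compactly supported measure $\nu$ on $\sR^d$ with $d > 2$, they show $\E[\gW_1(\nu_m, \nu)] = \gO(m^{-1/d})$; plugging this in yields the first claim $\Var_m(\phi_\# \mu) = \gO(m^{-1/d})$. For the refined statement, I would use their notion of upper Wasserstein dimension, defined via the covering numbers of the support of $\nu$: if $\nu$ is (approximately) supported on a set whose $\eps$-covering number grows like $\eps^{-d'}$ with $d' > 2$, then $\E[\gW_1(\nu_m, \nu)] = \gO(m^{-1/d'})$. Substituting in the triangle-inequality bound gives $\Var_m(\phi_\# \mu) = \gO(m^{-1/d'})$.

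The main obstacle, as also noted in \citet{solomon2020k} for the $p=2$ case, is really in the word \emph{approximately}: stating precisely what it means for $\phi_\# \mu$ to be "approximately $d'$-dimensional," and verifying the covering/entropy condition in \citet{weed2017sharp} under that definition. This is why the proposition is labelled informal; a rigorous version would specify the Weed--Bach upper Wasserstein dimension (and track the transition regimes $d'<2$, $d'=2$, $d'>2$, with the constants possibly depending on the diameter of $\mathrm{supp}(\phi_\# \mu)$). Everything else, i.e.\ the reduction from two-sample to one-sample rates and the application of \citet{weed2017sharp}, is routine.
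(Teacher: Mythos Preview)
Your proposal is correct and follows essentially the same route as the paper: reduce the two-sample $k$-variance to the one-sample quantity $\E[\gW_1(\mu'_S,\mu')]$ via the triangle inequality, then invoke known empirical Wasserstein convergence rates. The only minor difference is that the paper cites Fournier--Guillin (2015) for the first (ambient-dimension) statement and Weed--Bach only for the low-dimensional refinement, whereas you use Weed--Bach for both; this buys the paper the first bound under a finite-moment assumption rather than compact support, but is otherwise immaterial here.
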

We defer the complete statement to the supplement. Without any assumption, the rate gets significantly worse as the feature dimension $d$ increases. Nevertheless, for an intrinsically $d'$-dimensional measure, the variance decreases with a faster rate. 
For clustered features, we can obtain an even stronger rate:
\begin{proposition} 
\textnormal{\textbf{(Clusterable Measures)}}
\label{prop_cluster}
A distribution $\mu$ is $(n, \Delta)$-clusterable if $\textnormal{supp}(\mu)$ lies in the union of $n$ balls of radius at most $\Delta$. If $\phi_\# \mu$ is $(n, \Delta)$-clusterable, then for all $m \leq n(2\Delta)^{-2}$, $\Var_{m}(\phi_\# \mu) \leq 24\sqrt{\frac{n}{m}}$.
\end{proposition}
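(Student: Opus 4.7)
The plan is to reduce the $k$-variance to the empirical-to-population Wasserstein distance and then invoke the sharp rates for clusterable measures from \citet{weed2017sharp}. Since only the pushforward $\phi_\#\mu$ appears, write $\nu := \phi_\#\mu$ throughout. The argument has essentially two steps: a symmetrization via the triangle inequality, and a black-box application of the Weed–Bach clusterable-measure rate.

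First, I would apply the triangle inequality for $\gW_1$ together with the identical distribution of $S$ and $\tilde{S}$:
\begin{align*}
\gW_1(\nu_S, \nu_{\tilde{S}}) \leq \gW_1(\nu_S, \nu) + \gW_1(\nu, \nu_{\tilde{S}}).
\end{align*}
Taking expectations over independent $S,\tilde{S} \sim \nu^m$ yields
\begin{align*}
\Var_m(\nu) = \E_{S,\tilde{S}}[\gW_1(\nu_S, \nu_{\tilde{S}})] \leq 2\,\E_{S \sim \nu^m}[\gW_1(\nu_S, \nu)].
\end{align*}
This reduces the problem to controlling the expected Wasserstein distance between $\nu$ and its empirical counterpart $\nu_S$.

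Next, I would invoke the finite-sample convergence result of \citet{weed2017sharp} for $(n,\Delta)$-clusterable measures. Their analysis shows that in the ``pre-asymptotic'' regime $m \leq n(2\Delta)^{-2}$, the empirical measure has not yet resolved the within-cluster geometry, so $\nu$ behaves like a discrete measure on $n$ atoms and enjoys the parametric rate $\E[\gW_1(\nu_S, \nu)] \leq 12\sqrt{n/m}$. Plugging this into the symmetrized bound above gives $\Var_m(\nu) \leq 24\sqrt{n/m}$, which is the claim.

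\textbf{Main obstacle.} The triangle-inequality reduction is routine; the nontrivial content lives in the Weed–Bach bound itself. The underlying intuition is to replace each sample by the center of the ball it fell into, turning $\nu$ into an $n$-atom proxy (for which the standard $\sqrt{n/m}$ empirical-Wasserstein rate applies), and to pay at most $2\Delta$ per unit of mass for transport inside each ball. The threshold $m \leq n(2\Delta)^{-2}$ is exactly the point at which the sampling noise $\sqrt{n/m}$ dominates the within-cluster residual $2\Delta$, so the clean parametric rate persists below it. Since this result is available off the shelf, I would cite it rather than reprove it, and the proof of the proposition itself reduces to combining these two ingredients with the correct constants.
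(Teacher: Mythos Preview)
Your proposal is correct and follows exactly the paper's own proof: the paper also applies the triangle inequality to bound $\Var_m(\phi_\#\mu) \leq 2\,\E_S[\gW_1(\phi_\#\mu, \phi_\#\mu_S)]$ and then invokes Proposition 13 of \citet{weed2017sharp} with $p=1$ (giving the constant $9^1+3=12$) to obtain $24\sqrt{n/m}$. The additional intuition you provide about the Weed--Bach argument is accurate but not needed, since the paper treats that result as a black box.
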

We arrive at the parametric rate $\gO(m^{-1/2})$ if the cluster radius $\Delta$ is sufficiently small. In particular, the fast rate holds for large $m$ when the clusters are well concentrated. Different from conventional studies that focus on the complexity of a complete function class (such as Rademacher complexity \citep{bartlett2002rademacher}), our $k$-variance bounds capture the concentration of the feature distribution.

\subsection{Separation of Representations}
We showed in the previous section that the concentration of the representations is captured by the $k$-variance and that this notion translates the properties of the underlying probability measures into generalization bounds. Next, we show that maximizing the margin sheds light on the separation of the underlying representations in terms of Wasserstein distance. 
%
\begin{lemma}(\textnormal{\textbf{Large Margin and Feature Separation}})
Assume there exist $f_y,y=1\dots K$ that are $L$-Lipschitz and satisfy the max margin constraint $\rho_{f}(\phi(x),y)\geq \gamma $ for all $(x,y) \sim D$ , i.e:
$f_{y}(\phi(x))\geq f_{y'}(\phi(x))+\gamma ,~ \forall~ y'\neq y, \forall x \in \text{supp}(\mu_{y}).$
Then $\forall y\neq y'$, $\W_{1}(\phi_{\#}(\mu_{y}), \phi_{\#}(\mu_{y'}))\geq \frac{\gamma}{L}$.
\label{lemma:sepMaxMargin}
\end{lemma}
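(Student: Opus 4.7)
The plan is to apply the Kantorovich-Rubinstein dual characterization of $\W_1$ already recalled in the paper, namely $\W_1(\alpha,\beta)=\sup_{\Lip(h)\leq 1}\E_{\alpha}[h]-\E_{\beta}[h]$, and to exhibit a single $1$-Lipschitz test function $h:\sR^d\to\sR$ that witnesses the claimed lower bound $\gamma/L$ on $\W_1(\phi_\#\mu_y,\phi_\#\mu_{y'})$. The natural candidate, dictated by the pair of classes under consideration, is the normalized score difference
\begin{equation*}
h(z) \;=\; \frac{f_y(z)-f_{y'}(z)}{2L}.
\end{equation*}

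First I would verify Lipschitzness: since $f_y$ and $f_{y'}$ are each $L$-Lipschitz, the triangle inequality yields $|(f_y-f_{y'})(z)-(f_y-f_{y'})(z')|\leq 2L\,\|z-z'\|$, so $h$ is $1$-Lipschitz as required by the duality formula.

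Next I would apply the max-margin hypothesis twice, once for each class. For $x\in\text{supp}(\mu_y)$, taking the competitor $y'$ gives $f_y(\phi(x))-f_{y'}(\phi(x))\geq \gamma$; symmetrically, for $x\in\text{supp}(\mu_{y'})$, taking the competitor $y$ gives $f_y(\phi(x))-f_{y'}(\phi(x))\leq -\gamma$. Integrating these pointwise inequalities against $\mu_y$ and $\mu_{y'}$ respectively and subtracting,
\begin{equation*}
\E_{x \sim \mu_y}[f_y(\phi(x))-f_{y'}(\phi(x))] - \E_{x \sim \mu_{y'}}[f_y(\phi(x))-f_{y'}(\phi(x))] \;\geq\; 2\gamma.
\end{equation*}
Dividing by $2L$, using the pushforward identity $\E_{x\sim\mu_c}[h(\phi(x))]=\E_{z\sim\phi_\#\mu_c}[h(z)]$ to rewrite each integral in the feature space, and plugging the result back into Kantorovich-Rubinstein gives exactly $\W_1(\phi_\#\mu_y,\phi_\#\mu_{y'})\geq \gamma/L$.

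The main obstacle is not technical depth but choosing the right dual witness: one has to notice that the pairwise score difference $f_y-f_{y'}$ already encodes the margin gap in \emph{both} directions of the expectation difference (positive on the support of $\mu_y$, negative on the support of $\mu_{y'}$), so that normalizing by $2L$, the correct Lipschitz constant of a difference of two $L$-Lipschitz maps, recovers a gap of $\gamma/L$ rather than a looser $\gamma/(2L)$. Two bookkeeping subtleties remain: the hypothesis ``$\forall (x,y)\sim D$'' should be read $\mu$-almost-surely so that the pointwise margin inequalities pass under expectation, and the change-of-variables formula for pushforward measures must be invoked to align the dual expression with the Wasserstein distance between the feature distributions rather than the input distributions.
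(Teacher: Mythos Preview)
Your proposal is correct and follows essentially the same argument as the paper: both pick the dual witness $h=(f_y-f_{y'})/(2L)$, note it is $1$-Lipschitz, and combine the margin assumption on $\text{supp}(\mu_y)$ and $\text{supp}(\mu_{y'})$ to get a $2\gamma$ gap, hence $\W_1\geq\gamma/L$ via Kantorovich--Rubinstein. Your write-up is actually a bit more careful about the bookkeeping (pushforward change of variables, almost-sure reading of the hypothesis) than the paper's own proof.
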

Lemma \ref{lemma:sepMaxMargin} states that large margins imply Wasserstein separation of the representations of each class. It also sheds light on the Lipschitz constant of the downstream classifier $\gF$: $L \geq {\gamma}/{\min_{y, y^\prime}\W_{1}(\phi_{\#}(\mu_{y}), \phi_{\#}(\mu_{y'}))}$. One would need more complex classifiers, i.e, those with a larger Lipschitz constant, to correctly classify classes that are close to other classes, by Wasserstein distance, in feature space. We further relate the margin loss to Wasserstein separation:
\begin{lemma} 
\label{lemma:FS}
Define the pairwise margin loss $R_\gamma^{y, y'}$ for $y, y' \in \gY$ as
\begin{align*}
    R_\gamma^{y, y'}(f \circ \phi) = \frac{1}{2} \left(\mathbb{E}_{x \sim \mu_{y}} [\gamma -f_{y}(\phi(x))+f_{y'}(\phi(x))]_{+} + \mathbb{E}_{x \sim \mu_{y'}} [\gamma -f_{y'}(\phi(x))+f_{y}(\phi(x))]_{+} \right).
\end{align*}
Assume $f_c$ is $L$-Lipschitz for all $c \in \gY$. Given a margin $\gamma>0$, for all $y\neq y'$, we have:
\begin{align*}
\W_{1}(\phi_{\#}(\mu_{y}), \phi_{\#}(\mu_{y'}))\geq \frac{1}{L}\left(\gamma -R_\gamma^{y, y'}(f \circ \phi)\right).
\end{align*}
\end{lemma}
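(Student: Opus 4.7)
The plan is to invoke the Kantorovich--Rubinstein duality for $\W_{1}$ with a carefully chosen $1$-Lipschitz test function built from $f_{y}$ and $f_{y'}$, and then convert the resulting expectation into the pairwise margin loss by means of the elementary hinge inequality $[a]_{+}\geq a$.

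First I would set $h(z) := (f_{y}(z) - f_{y'}(z))/(2L)$. Since each $f_{c}$ is $L$-Lipschitz, the difference $f_{y}-f_{y'}$ is $2L$-Lipschitz, so $h$ is $1$-Lipschitz. Applying the KR duality recalled right before \eqref{eq:delta}, with $h$ as the dual potential, immediately yields
\begin{align*}
\W_{1}(\phi_{\#}\mu_{y},\phi_{\#}\mu_{y'}) \;\geq\; \tfrac{1}{2L}\Bigl(\mathbb{E}_{x\sim\mu_{y}}[f_{y}(\phi(x))-f_{y'}(\phi(x))] \,+\, \mathbb{E}_{x\sim\mu_{y'}}[f_{y'}(\phi(x))-f_{y}(\phi(x))]\Bigr).
\end{align*}

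Next I would apply the trivial rearrangement $a \geq \gamma - [\gamma - a]_{+}$ pointwise inside each of the two expectations, with $a = f_{y}(\phi(x))-f_{y'}(\phi(x))$ for the first and $a = f_{y'}(\phi(x))-f_{y}(\phi(x))$ for the second. Summing the two resulting lower bounds reproduces exactly $2\gamma - 2R_\gamma^{y,y'}(f\circ\phi)$ by the very definition of $R_\gamma^{y,y'}$. Dividing by $2L$ then gives the claimed inequality $\W_{1}(\phi_{\#}\mu_{y},\phi_{\#}\mu_{y'}) \geq (\gamma - R_\gamma^{y,y'}(f\circ\phi))/L$.

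There is no substantive obstacle; the only bookkeeping detail worth flagging is that two factors of $2$ appear and cancel: one from $\Lip(f_{y}-f_{y'}) \leq 2L$, and one from the $\tfrac{1}{2}$ in the symmetric definition of $R_\gamma^{y,y'}$. Had the pairwise loss been defined as a sum rather than an average, the bound would weaken to $1/(2L)$. As a sanity check on the constants, Lemma \ref{lemma:sepMaxMargin} is recovered as the limiting case $R_\gamma^{y,y'}=0$, which holds whenever the max-margin constraint is satisfied on the supports of $\mu_{y}$ and $\mu_{y'}$.
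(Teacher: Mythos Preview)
Your proposal is correct and follows essentially the same route as the paper's own proof: the paper also sets $g(z)=f_y(z)-f_{y'}(z)$, notes $g/(2L)$ is $1$-Lipschitz, plugs it into the Kantorovich--Rubinstein dual, and then applies the pointwise inequality $t\leq [t]_+$ (equivalently your $a\geq \gamma-[\gamma-a]_+$) inside each expectation. Your bookkeeping remark about the two cancelling factors of $2$ and the sanity check recovering Lemma~\ref{lemma:sepMaxMargin} are accurate.
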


We show a similar relation for the gradient-normalized margin \citep{elsayed2018large} in the supplement (Lemma \ref{lemma:RobustSeparation}): gradient normalization results in a robust Wasserstein separation of the representations, making the feature separation between classes robust to adversarial perturbations.

\begin{wrapfigure}{r}{0.5\textwidth}
  \begin{center}
  \vspace{-7mm}
    \includegraphics[width=0.5\textwidth]{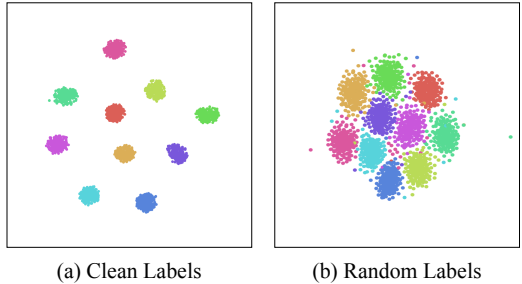}
  \end{center}
  \vspace{-4mm}
  \caption{t-SNE visualization of representations. Classes are indicated by
colors.} \label{fig_tsne}
  \vspace{-4mm}
\end{wrapfigure}

\paragraph{Example: Clean vs. Random Labels.}
Finally, we provide an illustrative example on how concentration and separation are associated with generalization. For the label corruption setting from Section \ref{exp_label_corrupt}, Figure \ref{fig_tsne} shows t-SNE visualizations \citep{van2008visualizing} of the representations learned with true or random labels on CIFAR-10. Training with clean labels leads to well-clustered representations. Although the model trained with random labels has 100\% training accuracy, the resulting feature distribution is less concentrated and separated, implying worse generalization.

\section{Conclusion}

In this work, we present $k$-variance normalized margin bounds, a new data-dependent generalization bound based on optimal transport. The proposed bounds predict the generalization error well on the large scale PGDL dataset \citep{jiang2020neurips}. We use our theoretical bounds to shed light on the role of the feature distribution in generalization. Interesting future directions include (1) trying better approximations to the Lipschitz constant such as \citep{LipchitzRelu, scaman2018lipschitz}, (2) exploring the connection between contrastive representation learning \citep{chen2020simple, chuang2020debiased, he2020momentum, khosla2020supervised,robinson2020contrastive} and  generalization theory, and (3) studying the generalization of adversarially robust deep learning akin to \citep{Duchi}.

\paragraph{Acknowledgements} This work was in part supported by NSF BIGDATA award IIS-1741341, ONR grant N00014-20-1-2023 (MURI ML-SCOPE) and the MIT-MSR Trustworthy \& Robust AI Collaboration.


\bibliographystyle{plainnat}
\bibliography{bibfile}

\appendix
\newtheorem{innercustomlemma}{Lemma}
\newenvironment{customlemma}[1]
  {\renewcommand\theinnercustomlemma{#1}\innercustomlemma}
  {\endinnercustomlemma}
  
  \newtheorem{innercustomprop}{Proposition}
\newenvironment{customprop}[1]
  {\renewcommand\theinnercustomprop{#1}\innercustomprop}
  {\endinnercustomprop}
  
\newtheorem{innercustomthm}{Theorem}
\newenvironment{customthm}[1]
  {\renewcommand\theinnercustomthm{#1}\innercustomthm}
  {\endinnercustomthm}

\newpage
\section*{Appendix}

\section{Broader Impact}
\label{sec_impact}
We work on generalization in deep learning, a fundamental learning theory problem, which does not have an obvious negative societal impact. Nevertheless, in many applications of societal interest, such as medical data analysis \citep{miotto2018deep} or drug discovery \citep{stokes2020deep}, predicting the generalization could be very important, where our work can potentially benefit related applications. Understanding and measuring the generalization are also important directions for machine learning fairness \citep{cotter2019training} and AI Safety \citep{amodei2016concrete}.

\section{Additional Experiment Results}

\subsection{Summary of Margins}

\newcolumntype{P}{>{\raggedright\arraybackslash}p{3.1in}}
\begin{table}[!htb]
      \centering
    \begin{tabularx}{0.81\textwidth}{l P}
            \hline
             & Definition   \\
            \hline
            \hline
            Margin & $\rho_f(\phi(x),y)$
            \\
            [1mm]
            SN-Margin  \citep{bartlett2017spectrally} & $\rho_f(\phi(x),y) / SC(f \circ \phi) $
            \\
            [1mm]
            GN-Margin \citep{jiang2018predicting} & $\tilde{\rho}_f(\phi(x),y) =\rho_f(\phi(x),y) / (\| \nabla_\phi \rho_f(\phi(x),y)\|_2  + \epsilon)$
            \\
            [1mm]
            TV-GN-Margin \citep{jiang2018predicting} & $\tilde{\rho}_f(\phi(x),y) / \sqrt{\Var_{x\sim \mu}(||\phi(x)||^2)}$
            \\
            [1mm]
            $k$V-Margin (Ours) & $\rho_f(\phi(x),y) / \E_{c\sim p} [\Var_{m_c}(\phi_\#  \mu_c) \cdot \Lip(\rho_f(\cdot, c))]$
            \\
            [1mm]
            $k$V-GN-Margin (Ours) & $\tilde{\rho}_f(\phi(x),y) / \E_{c \sim p} [\Var_{m_c}(\phi_\#  \mu_c) \cdot \Lip(\tilde{\rho}_f(\cdot, c))]$
            \\
            [1mm]
            \hline
        \end{tabularx}
        \vspace{3mm}
    \caption{\textbf{Definitions of margins.} The $SC$ stands for the spectral complexity defined in \citep{bartlett2017spectrally}. We use the empirical estimation of $k$-variance and Lipschitz constant defined in section \ref{sec_measure} to calculate $k$V-Margin and $k$V-GN-Margin.
}
\label{table_margins}
\end{table}

\subsection{Variance of Empirical Estimation}\label{app:std}
In Table \ref{table_pgdl_1}, we show the average scores over 4 random sampled subsets. We now show the standard deviation in Table \ref{table_pgdl_std}. Overall, the standard deviation of the estimation is fairly small, consistent to the observation in Theorem \ref{thm:varVanillaESW1M}.
\begin{table*}[!ht]
\small
\begin{center}{%
\begin{tabularx}{0.98\textwidth}{l| *{8}{c}}
\hline
 &  \fontsize{8pt}{8pt}\selectfont\textbf{CIFAR} & \fontsize{8pt}{8pt}\selectfont\textbf{SVHN} & \fontsize{8pt}{8pt}\selectfont\textbf{CINIC} & \fontsize{8pt}{8pt}\selectfont\textbf{CINIC} & \fontsize{8pt}{8pt}\selectfont\textbf{Flowers} &
 \fontsize{8pt}{8pt}\selectfont\textbf{Pets} &
 \fontsize{8pt}{8pt}\selectfont\textbf{Fashion} &
 \fontsize{8pt}{8pt}\selectfont\textbf{CIFAR}
 \\
 & VGG & NiN & FCN bn & FCN & NiN & NiN & VGG & NiN \\
\hline
\hline
Margin$^\dagger$ & 0.25 & 0.84 & 0.16 & 0.13 & 0.01 & 0.04 & 0.06 & 0.59
\\
SN-Margin$^\dagger$ \citep{bartlett2017spectrally} & 0.07 & 0.06 & 0.01 & 0.03 & 0.00 & 0.01 & 0.01 & 0.00
\\
GN-Margin 1st \citep{jiang2018predicting} & 0.18 & 0.17 & 0.27 & 0.15 & 0.06 & 0.02 & 0.10 & 0.52
\\
GN-Margin 8th \citep{jiang2018predicting} & 0.03 & 1.44 & 0.09 & 0.04 & 0.01 & 0.00 & 0.05 & 0.14
\\
TV-GN-Margin 1st \citep{jiang2018predicting} & 0.26 & 0.78 & 0.49 & 0.62 & 0.03 & 0.05 & 0.03 & 1.29
\\
TV-GN-Margin 8th \citep{jiang2018predicting} & 0.31 & 0.35 & 0.18 & 0.19 & 0.01 & 0.14 & 0.09 & 0.73
\\
\hline
\hline
$k$V-Margin$^\dagger$ 1st& 0.40 & 1.57 & 0.55 & 0.45 & 0.07 & 0.03 & 0.23 & 2.78
\\
$k$V-Margin$^\dagger$ 8th & 0.64 & 0.89 & 0.24 & 0.21 & 0.02 & 0.03 & 0.07 & 0.84
\\
$k$V-GN-Margin$^\dagger$ 1st& 0.15 & 0.56 & 0.47 & 0.72 & 0.02 & 0.04 & 0.06 & 1.70
\\
$k$V-GN-Margin$^\dagger$ 8th & 0.81 & 0.93 & 0.16 & 0.33 & 0.03 & 0.01 & 0.04 & 0.44
\\
\hline
\end{tabularx}}
\end{center}
\caption{\textbf{Standard deviation of CMI score on PGDL tasks.} 
}
\label{table_pgdl_std}
\vskip -0.15in
\end{table*}

\subsection{The effect of $k$ in $k$-Variance}\label{app:effectk}
We next show the ablation study with respect to $m$ (data size) in Table \ref{table_pgdl_K}. In particular, we draw $\overline{m}_c \times \#\textnormal{classes}$ samples where $\overline{m}_c = 50, 100,$ and $200$. Note that if the class distribution $p$ is not uniform, $m_c$ could be different for each class. The scores are computed with one subset for computational efficiency. Since the sample size per class of Flowers and Pets datasets are smaller than $50$, the ablation study is not applicable.

\begin{table*}[!ht]
\small
\begin{center}{%
\begin{tabularx}{0.82\textwidth}{l| *{6}{c}}
\hline
 &  \fontsize{8pt}{8pt}\selectfont\textbf{CIFAR} & \fontsize{8pt}{8pt}\selectfont\textbf{SVHN} & \fontsize{8pt}{8pt}\selectfont\textbf{CINIC} & \fontsize{8pt}{8pt}\selectfont\textbf{CINIC} & 
 \fontsize{8pt}{8pt}\selectfont\textbf{Fashion} &
 \fontsize{8pt}{8pt}\selectfont\textbf{CIFAR}
 \\
 & VGG & NiN & FCN bn & FCN  & VGG & NiN \\
\hline
\hline
$k$V-Margin 1st (50) & 7.23 & 30.21 & 37.21 & 17.65 & 1.74 & 14.39
\\
$k$V-Margin 1st (100) & 5.83 & 29.11 & 36.45 & 17.51  & 1.89 & 13.89
\\
$k$V-Margin 1st (200) & 4.81 & 29.79 & 36.23 & 17.01  & 2.37 & 12.63
\\
$k$V-Margin 8th (50) & 31.66 & 28.10 & 5.82 & 15.13  & 0.36 & 1.54
\\
$k$V-Margin 8th (100) & 29.72 & 27.20 & 6.01 & 15.10  & 0.37 & 1.43
\\
$k$V-Margin 8th (200) & 28.14 & 27.72 & 5.84 & 15.27 & 0.19 & 3.11
\\
\hline
\hline
$k$V-GN-Margin 1st (50) & 19.58 & 45.42 & 31.29 & 15.39  & 0.55 & 23.59
\\
$k$V-GN-Margin 1st (100) & 18.17 & 45.24 & 30.78 & 15.66  & 0.56 & 21.85
\\
$k$V-GN-Margin 1st (200) & 17.81 & 44.93 & 30.30 & 15.64  & 0.78 & 20.80
\\
$k$V-GN-Margin 8th (50) & 40.75 & 44.71 & 6.83 & 15.64  & 0.36 & 9.36
\\
$k$V-GN-Margin 8th (100) & 41.09 & 46.28 & 6.71 & 15.99  & 0.31 & 8.14
\\
$k$V-GN-Margin 8th (200) & 41.05 & 47.57 & 6.63 & 15.96  & 0.25 & 8.66
\\
\hline
\end{tabularx}}
\end{center}
\caption{\textbf{The role of data size in estimating $k$-variance} The number between brackets denotes the average class size $\overline{m}_c$.}
\label{table_pgdl_K}
\vskip -0.15in
\end{table*}

\subsection{Spectral Approximation to Lipschitz Constant}
\label{app:upper_lowerBound}
In section \ref{sec_measure}, we use the supermum of the norm of the jacobian on the training set as an approximation to Lipschitz constant, which is a simple lower bound of Lipschitz constant for ReLU networks \citep{LipchitzRelu}. It is well known that the spectral complexity, the multiplication of spectral norm of weights, is an upper bound on the Lipschitz constant of ReLU networks \citep{miyato2018spectral}. We replace the $\widehat{\Lip}$ in $k$V-Margin with the spectral complexity of the network and show the results in Table \ref{table_pgdl_3}. The norm of the jacobian yields much better results than spectral complexity, which aligns with the observations in \citep{dziugaite2020search, jiang2019fantastic}.

\begin{table*}[!ht]
\small
\begin{center}
\begin{tabularx}{0.95\textwidth}{l| *{8}{c}}
\hline
 &  \fontsize{8pt}{8pt}\selectfont\textbf{CIFAR} & \fontsize{9pt}{9pt}\selectfont\textbf{SVHN} & \fontsize{8pt}{8pt}\selectfont\textbf{CINIC} & \fontsize{9pt}{9pt}\selectfont\textbf{CINIC} & \fontsize{8pt}{8pt}\selectfont\textbf{Flowers} &
 \fontsize{8pt}{8pt}\selectfont\textbf{Pets} &
 \fontsize{8pt}{8pt}\selectfont\textbf{Fashion} &
 \fontsize{8pt}{8pt}\selectfont\textbf{CIFAR}
 \\
 & VGG & NiN & FCN bn & FCN & NiN & NiN & VGG & NiN \\
\hline
\hline
Spectral 1st & 3.20 & 1.19 & 0.31 & 2.68 & 0.24 & 2.43 & 0.58 & 7.06
\\
Spectral 8th & 1.08 & 2.26 & 0.69 & 0.91 & 0.08 & 0.99 & 1.99 & 4.72
\\
Jacobian Norm 1st & 5.34 & 26.78 & 37.00 & 16.93 & 6.26 & 2.11 & 1.82 & 15.75
\\
Jacobian Norm 8th & 30.42 & 26.75 & 6.05 & 15.19 & 0.78 & 1.60 & 0.33 & 2.26
\\
\hline
\end{tabularx}
\end{center}
\caption{\textbf{$k$-vairance normalized margins with spectral complexity.} We show the score of $k$V-Margin with different approximations to Lipschitz constant. Empirically, gradient norm of data points yields better results.
}
\label{table_pgdl_3}
\end{table*}


\subsection{Experiment Details}
\paragraph{PGDL Dataset}
The models and datasets are accessible with Keras API \citep{chollet2015} (integrated with TensorFlow \citep{tensorflow2015-whitepaper}): \url{https://github.com/google-research/google-research/tree/master/pgdl} (Apache 2.0 License).
We use the official evaluation code of PGDL competition \citep{jiang2020neurips}. All the scores can be computed with one TITAN X (Pascal) GPUs. The intuition behind the sample size $\min(200 \times \#\textnormal{classes},\textnormal{data\_size})$ is that we want the average sample size for each class is $200$. Note that if the class distribution $p$ is not uniform, the sample size for each class could be different. However, the sample size per class of Flowers and Pets datasets are smaller than $200 \times \#\textnormal{classes}$, we constrain the sample size to be dataset size at most. We follow the setting in \citep{natekar2020representation} to calculate the mixup accuracy with label-wise mixup.

\paragraph{Other Experiments}
The experiments in section \ref{exp_label_corrupt} are run with the code from \citep{zhang2016understanding}: \url{https://github.com/pluskid/fitting-random-labels} (MIT License). We trained the models with the exact same code and visualize the margins with our own implementation via PyTorch \citep{NEURIPS2019_9015}. For the experiments in section \ref{exp_task_hard}, we only change the data loader part of the code. The models of MNIST and SVHN are trained for 10 and 20 epochs, respectively. To visualize the t-SNE in section \ref{analysis}, we use the default parameter in scikit-learn \citep{scikit-learn} (sklearn.manifold.TSNE) with the output from the 4th residual block of the network.

\section{Proofs}
\subsection{Estimating the Lipschitz Constant of the GN-Margin }\label{app:lipschitz}
\begin{align}
   \label{eq_jac}
   \widehat{\Lip}(\tilde{\rho}_f(\cdot, c)) = \max_{x \in S_c }\left \| \nabla_\phi \frac{\rho_f(\phi(x), c)}{\| \nabla_\phi \rho_f(\phi(x),y)\|_2  + \epsilon} \right\|_2 = \max_{x \in S_c } \frac{\| \nabla_\phi \rho_f(\phi(x),y)\|_2}{\| \nabla_\phi \rho_f(\phi(x),y)\|_2  + \epsilon} \approx 1
\end{align}
\paragraph{Proof of equation \ref{eq_jac}}
We first expand the derivative as follows:
\begin{align*}
    \widehat{\Lip}(\tilde{\rho}_f(\cdot, c))
    &= \max_{x \in \gX }\left \| \nabla_\phi \frac{\rho_f(\phi(x), c)}{\| \nabla_\phi \rho_f(\phi(x),y)\|_2  + \epsilon} \right\|_2
    \\
    &= \max_{x \in \gX } \left \| \frac{\nabla_\phi \rho_f(\phi(x), c) (\| \nabla_\phi \rho_f(\phi(x),y)\|_2  + \epsilon) - \rho_f(\phi(x), c) \nabla_\phi \| \nabla_\phi \rho_f(\phi(x),y)\|_2 }{(\| \nabla_\phi \rho_f(\phi(x),y)\|_2  + \epsilon)^2} \right\|_2.
\end{align*}
Note that $\rho_f$ is piecewise linear as $f$ is ReLU networks. 
For  points  where $\rho_{f}$ is differentiable i.e that do not lie on the boundary between linear regions, the second order derivative is zero. In particular, we have $\nabla_\phi \| \nabla_\phi \rho_f(\phi(x),y)\|_2 = 0$. Therefore, excluding from $\cX$ non differentiable points of $\rho_{f}$, the empirical Lipschitz  estimation (lower bound) can be written as
\begin{align*}
    \widehat{\Lip}(\tilde{\rho}_f(\cdot, c)) &= \max_{x \in \gX } \left \| \frac{\nabla_\phi \rho_f(\phi(x), c) (\| \nabla_\phi \rho_f(\phi(x),y)\|_2  + \epsilon)}{(\| \nabla_\phi \rho_f(\phi(x),y)\|_2  + \epsilon)^2} \right\|_2
    \\
    &= \max_{x \in \gX } \left \| \frac{\nabla_\phi \rho_f(\phi(x), c)}{\| \nabla_\phi \rho_f(\phi(x),y)\|_2  + \epsilon} \right\|_2
    \\
    &= \max_{x \in \gX } \frac{\| \nabla_\phi \rho_f(\phi(x), c)\|_2}{\| \nabla_\phi \rho_f(\phi(x),y)\|_2  + \epsilon} 
    \\
    & \leq 1,
\end{align*}
We can see that $\widehat{\Lip}$ is tightly upper bounded by $1$ when $\epsilon$ is a very small value.

\paragraph{Discussion of Lower and Upper Bounds on the Lipschitz constant}
Note that the approximation of the lipchitz constant will result in additional error in the generalization bound as follows:
\begin{align*}
&\hat{R}_{\gamma,m}(f \circ \phi) + \E_{c \sim p_y} \left[\frac{\widehat{\Lip}(\rho_{f}(\cdot, c))}{\gamma} \left(\widehat{\Var}_{\lfloor\frac{ m_c}{2n} \rfloor,n}(\phi_\#\mu_c) + 2B \sqrt{\frac{\log(2K/\delta)}{n\lfloor \frac{m_c}{2n} \rfloor }}\right) \right] \\
&+\E_{c \sim p_y} \left[ \frac{\Lip(\rho_f(\cdot,c))-\widehat{\Lip}(\rho_{f}(\cdot, c))}{\gamma} \left(\widehat{\Var}_{\lfloor\frac{ m_c}{2n} \rfloor,n}(\phi_\#\mu_c) + 2B \sqrt{\frac{\log(2K/\delta)}{n\lfloor \frac{m_c}{2n} \rfloor }}\right) \right] + \sqrt{\frac{\log(\frac{2}{\delta})}{2m}}.
\end{align*}
While for an upper bound on the lipschitz constant  the third term is negative and can be ignored in the generalization bound. For a lower bound this error term $\Lip(\rho_f(\cdot,c))-\widehat{\Lip}(\rho_{f}(\cdot, c))$ results in additional positive error term. Bounding this error term is beyond the scope of this work and we leave it for a future work. 

\subsection{Proof of the Margin Bound}

\begin{proof}[Proof of Theorem \ref{thm_margin}]

Recall the margin definition: \[ \rho_f(\phi(x), y) = f_y(\phi(x)) - \max_{y^\prime \neq y} f_{y^\prime}(\phi(x))\]

Let $\mu_{c}(x)= \mathbb{P}(x| y=c)$, and let $p(y)=\mathbb{P}(Y=y)=\pi_{y}$. Given $f \in \gF$ and $\phi \in \Phi=\{\phi: \mathcal{X}\to \mathcal{Z}, ||\phi(x)||\leq R\}$, we are interested in bounding the class-average zero-one loss of a hypothesis $f \circ \phi$:
\begin{align*}
 R_{\mu}(f \circ \phi)=\sum_{c=1}^{K}\pi_{k} R_{\mu_c}(f \circ \phi) =  \sum_{c=1}^{K} \pi_{k}\E_{x \sim \mu_c}[\mathbbm{1}_{\rho_f(\phi(x),c) \leq 0}],
\end{align*}
where we will bound the error of each class $c \in \gY$ separately. To do so, the margin loss defined by $L_\gamma$ by $L_\gamma(u) = \mathbbm{1}_{u \leq 0 } + (1 - \frac{u}{\gamma}) \mathbbm{1}_{0 < u \leq \gamma}$ would be handy. 

Note that :
\[ R_{\mu}(f \circ \phi)\leq \mathbb{E}_{(x,y)}L_{\gamma}(\rho_{f}(\phi(x),y)), \]
(see for example Lemma A.4 in \citep{bartlett2017spectrally} for a proof of this claim.)


By McDiarmid Inequality, we have with probability at least $1 - \delta$,
\begin{align}
    R_{\mu}(f \circ \phi)\leq \mathbb{E}_{(x,y)}L_{\gamma}(\rho_{f}(\phi(x),y)) \leq \sum_{c=1}^K\pi_{c}\hat{\E}_{S \sim \mu^m_c}L_\gamma(\rho_{f}(\phi(x), c)) + \mathbb{D}(f\circ \phi, \mu)+ \sqrt{\frac{\log(1 / \delta)}{2m}}.
    \label{eq:gen}
\end{align}
where
\[\mathbb{D}(f\circ \phi, \mu)=\E_{S_1 \sim \mu_1^m}\dots\E_{S_K \sim \mu_K^m} \left[\sup_{f \in \mathcal{F}} \left(\sum_{c=1}^K \pi_{c} (\E_{\mu_c}[L_\gamma(\rho_{f}(\phi(x), c)))] - \hat{\E}_{S_c \sim \mu_c^m}[L_\gamma(\rho_{f}(\phi(x), c))]) \right) \right] \]
Note that the $\sup$ here is taken only on the classifier function class and not on the classifier and the feature map together. 
For a given class $c$ and feature map $\phi$ define:  $$\gG_c = \left\{h| h(z)= L_\rho \circ \rho_{f }(z,c) : f \in \gF, z \in \mathcal{Z}\right\}.$$ 
Using the fact that $\sup (a+b)\leq \sup a+\sup b  $, we have:
\begin{align}
\mathbb{D}(f\circ \phi, \mu)&\leq \sum_{c=1}^K \pi_{c} \mathbb{E}_{S_c\sim \mu_c} \sup_{f \in \mathcal{F}} \left(\E_{\mu_c}[L_\gamma(\rho_{f}(\phi(x), c)))] - \hat{\E}_{S_c \sim \mu_c^m}[L_\gamma(\rho_{f}(\phi(x), c))]\right) \nonumber  \\
&=\sum_{c=1}^K \pi_{c} \mathbb{E}_{S_c\sim \mu_c} \left[\sup_{h \in \gG_c} \left(\E_{\mu_c}[h(\phi(x))] - \hat{\E}_{S\sim \mu^m_c}[h(\phi(x))] \right)\right],
\label{eq:Dev}
\end{align}
where the last equality follows from the definition of the function class $\gG_c$

We are left now with bouding each class dependent deviation. We drop the index $c$ from $S_c$ in what follows in order to avoid cumbersome notations.  Considering an independent sample  of same size $\tilde{S}$ from $\mu_c$ we have:
\begin{align}
    \E_{S \sim \mu_c^m }\left[\sup_{h \in \gG_c} \left(\E_{\mu_c}[h(\phi(x))] - \hat{\E}_{S\sim \mu^m_c}[h(\phi(x))] \right) \right] \leq \E_{S, \tilde{S} \sim \mu_c^m }\left[\sup_{h \in \gG_c} \hat{\E}_{S}[h(\phi(x)] - \hat{\E}_{\tilde{S}}[h(\phi(x))] \right].
    \label{eq:twosample}
\end{align}

Note that $h(z)=L_{\gamma}(\rho_{f}(z,c))$ is lipchitz with lipchitz constant $\frac{1}{\gamma}\Lip(\rho_{f}(.,c))$, since $L_{\gamma}$ is lipchitz  with lipchitz constant $\frac{1}{\gamma}$and by assumption the margin $\rho_f(z,c)$ is lipchitz in its first argument. 
By the dual of the Wasserstein 1 distance we have:
$$\mathcal{W}_1(\phi_{\#}p_{S},\phi_{\#}p_{\tilde{S}})= \sup_{h, \Lip(h)\leq 1} \hat{\E}_{S}[h(\phi(x)] - \hat{\E}_{\tilde{S}}[h(\phi(x))] $$
Since  $\gG_c$ are subset of lipchitz of functions with lipchitz constant $\frac{\Lip(\rho_{f}(.,c))}{\gamma}$, it    follows that:
\begin{equation}
\sup_{h \in \gG_c} \hat{\E}_{S}[h(\phi(x)] - \hat{\E}_{\tilde{S}}[h(\phi(x)) \leq \frac{\Lip(\rho_{f}(.,c))}{\gamma} \mathcal{W}_1(\phi_{\#}p_{S},\phi_{\#}p_{\tilde{S}})
\label{eq:WassBound} 
\end{equation}
It follows from  \eqref{eq:twosample} and \eqref{eq:WassBound}, that:
\begin{align}
\E_{S \sim \mu_c^m }\left[\sup_{h \in \gG_c} \left(\E_{\mu_c}[h(\phi(x))] - \hat{\E}_{S\sim \mu^m_c}[h(\phi(x))] \right) \right] &\leq \frac{\Lip(\rho_{f}(.,c))}{\gamma} \E_{S, \tilde{S} \sim \mu_c^m }\mathcal{W}_1(\phi_{\#}p_{S},\phi_{\#}p_{\tilde{S}}) \nonumber\\
&=\frac{\Lip(\rho_{f}(.,c))}{\gamma} \Var_{m_c}(\phi_\#\mu_c).
\label{eq:k-varianceBound}
\end{align}

Finally Plugging  \eqref{eq:k-varianceBound} in \eqref{eq:Dev} we obtain finally:
\begin{equation}
\mathbb{D}(f\circ \phi, \mu)\leq \frac{\sum_{c=1}^K \pi_{c}\Lip(\rho_{f}(.,c))\Var_{m_c}(\phi_\#\mu_c) }{\gamma}
\label{eq:BoundD}
\end{equation}

Using \eqref{eq:BoundD} and noting that,
\[L_\gamma(\rho_{f}(\phi(x), c)) \leq \mathbbm{1}_{\rho_{f}(\phi(x), c) \leq \gamma} \]
we finally have by \eqref{eq:gen}, the following generalization bound, that holds with probability $1-\delta$: 
\begin{align*}
R_{\mu}(f \circ \phi) &\leq \sum_{c=1}^K \pi_c\hat{\E}_{S \sim \mu_c^m}[\mathbbm{1}_{\rho_{f}(\phi(x), c) \leq \gamma}] + \frac{1}{\gamma}\sum_{c=1}^K \pi_{c}\Lip(\rho_{f}(.,c))\Var_{m_c}(\phi_\#\mu_c)  + \sqrt{\frac{\log(1 / \delta)}{2m}}
\\
&=\hat{R}_\gamma(f \circ \phi) + \E_{c \sim p_y} \left[\frac{\Lip(\rho_{f(.)}(\cdot, c))}{\gamma} \Var_{m_c}(\phi_\#\mu_c) \right] + \sqrt{\frac{\log(1 / \delta)}{2m}}
\end{align*}


\end{proof}

\begin{lemma}
The margin $\rho_{f}(.,y)$ is lipchitz in its first argument if $\mathcal{F}_j$ are lipchitz with constant $L$. 
\end{lemma}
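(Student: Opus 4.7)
The plan is to decompose the margin $\rho_f(z,y) = f_y(z) - \max_{y'\neq y} f_{y'}(z)$ into two pieces and bound the Lipschitz constant of each. Since $f_y$ is $L$-Lipschitz by assumption, the first piece is immediate. The key observation I would use for the second piece is that the pointwise maximum of finitely many $L$-Lipschitz functions is itself $L$-Lipschitz, via the standard inequality $|\max_i a_i - \max_i b_i| \le \max_i |a_i - b_i|$. Then the margin is a difference of two $L$-Lipschitz functions, hence $2L$-Lipschitz.

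In more detail, first I would fix $z, z' \in \mathcal{Z}$ and write
\begin{align*}
|\rho_f(z,y) - \rho_f(z',y)| \;\le\; |f_y(z) - f_y(z')| \;+\; \bigl|\max_{y'\neq y} f_{y'}(z) - \max_{y'\neq y} f_{y'}(z')\bigr|.
\end{align*}
The first term is at most $L\|z-z'\|$ by hypothesis. For the second term, I would apply the elementary lemma: for any finite index set $I$ and real numbers $(a_i)_{i\in I}, (b_i)_{i\in I}$, $|\max_i a_i - \max_i b_i| \le \max_i |a_i - b_i|$. Applying this with $a_{y'} = f_{y'}(z)$ and $b_{y'} = f_{y'}(z')$ for $y' \neq y$ gives
\begin{align*}
\bigl|\max_{y'\neq y} f_{y'}(z) - \max_{y'\neq y} f_{y'}(z')\bigr| \;\le\; \max_{y'\neq y} |f_{y'}(z) - f_{y'}(z')| \;\le\; L\|z-z'\|.
\end{align*}

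Combining these, $|\rho_f(z,y) - \rho_f(z',y)| \le 2L \|z-z'\|$, so $\mathrm{Lip}(\rho_f(\cdot,y)) \le 2L$, which yields the claim. There is no real obstacle here; the only subtlety worth flagging is justifying the max-Lipschitz inequality, which follows in two lines by noting that if $a_{i^*} = \max_i a_i$ then $\max_i a_i - \max_i b_i \le a_{i^*} - b_{i^*} \le \max_i|a_i-b_i|$, and symmetrically for the other direction. This also gives the explicit Lipschitz constant $2L$ that can be tracked into Theorem~\ref{thm_margin} if a concrete constant is needed downstream.
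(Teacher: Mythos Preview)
Your proof is correct and essentially the same as the paper's. The paper picks explicit argmax indices $c,c'$ at $z,z'$ and bounds the two one-sided differences separately, which is exactly the two-line justification you give at the end for the inequality $|\max_i a_i - \max_i b_i|\le \max_i |a_i-b_i|$; both arguments yield the constant $2L$.
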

\begin{proof}
Assume $f_{c}(z)= \max_{y'\neq y}f_{y'}(z)$
and $f_{c'}(z')= \max_{y'\neq y}f_{y'}(z')$. Ties are broken by taking the largest index among the ones achieving the max. 

\begin{align*}
\rho_{f}(z,y)- \rho_{f}(z',y)&=f_y(z)-\max_{y'\neq y}f_{y'}(z) - (f_y(z') -\max_{y'\neq y}f_{y'}(z')) \\     
&=  f_{y}(z)-f_{y}(z') + f_{c'}(z') - f_{c}(z)\\
&\leq L||z-z'|| + f_{c'}(z') - f_{c'}(z)\\
&\leq  L||z-z'|| + L ||z-z'||\\
&= 2L ||z-z'||
\end{align*}
where we used that all $f_{y}$ are lipchitz and the fact that $f_{c}(z) \geq f_{c'}(z)$.
On the other hand:
\begin{align*}
\rho_{f}(z,y)- \rho_{f}(z',y)&= f_{y}(z)-f_{y}(z') + f_{c'}(z') - f_{c}(z)\\
&\geq -L ||z-z'|| + f_{c}(z') -f_{c}(z)\\
&\geq -L ||z-z'|| -L ||z-z'||\\
&= -2L ||z-z'||.
\end{align*}
where we used that all $f_{y}$ are lipchitz and the fact that $f_{c'}(z') \geq f_{c}(z')$. Combining this two inequalities give the result.
\end{proof}

\begin{proof}[Proof of Theorem \ref{thm_gn_margin}]
It is enough to show that:
\[ R_{\mu}(f \circ \phi)\leq \mathbb{E}_{(x,y)}[L_{\gamma}(\tilde{\rho}_{f}(\phi(x),y))], \]
and the rest of the proof is the same as in Theorem 2. For any $\xi(x,y)>0$, and $\gamma>0$ 
\begin{align*}
 R_{\mu}(f \circ \phi) &= \mathbb{P}_{(x,y)}(\argmax_{c} f_{c}(\phi(x))\neq y )\\
 &\leq  \mathbb{P}(f_{y}(\phi(x)) - \max_{y'\neq y}f_{y'}(\phi(x)) \leq 0  )\\
 &=\mathbb{P}\left(\frac{f_{y}(\phi(x)) - \max_{y'\neq y}f_{y'}(\phi(x))}{\xi(x)} \leq 0  \right)\\
 &\leq  \mathbb{E}\left[\mathbbm{1}_{\frac{f_{y}(\phi(x)) - \max_{y'\neq y}f_{y'}(\phi(x))}{\xi(x,y)} \leq 0}\right]\\
 &\leq \mathbb{E}\left[L_{\gamma}\left(\frac{f_{y}(\phi(x)) - \max_{y'\neq y}f_{y'}(\phi(x))}{\xi(x,y)}\right)\right].
\end{align*}
Setting $\xi(x,y)=\| \nabla_\phi \rho_f(\phi(x),y)\|_2  + \epsilon,$ gives the result.
\end{proof}

\subsection{Proof of the Estimation Error of $k$-Variance (Generalization Error of the Encoder)}

\begin{proof}[Proof of Lemma \ref{lemma_err_encoder}]
We would like to estimation to the $k$-variance with $\widehat{\Var}_k(\phi_\#\mu) = \frac{1}{n}\sum_{j=1}^n\gW_1(\phi_\# p_{S^j},  \phi_\# p_{\tilde{S}^j})$ as a function of the $nk$ independent samples from which it is computed, each sample being a pair $(x_i, \tilde{x}_i)$. To apply the McDiarmid’s Inequality, we have to examine the stability of the empirical $k$-variance. 

The Kantorovich--Rubinstein duality gives us the general formula of $\gW1$ distance:
\[
\gW_1(P,Q) = \sup_{\Lip(f) \leq 1}  \E_P[f] - \E_Q[f]
\]
In our case, separately for each $j$, we can write
\begin{align*}
    \gW_1(\phi_\# p_{S^j}, \phi_\#p_{\tilde{S}^j}) 
    = \sup_{\Lip(f) \leq 1} \frac{1}{k} \sum_{\ell = 1}^k (f(\phi(x_\ell^j))- f(\phi(\tilde{x}^j_\ell))).
\end{align*}
Recall that the $(x_\ell, \tilde{x}_\ell)$ are independent across $\ell$ and $j$. Consider replacing one of the elements $(x_i^j, \tilde{x}_i^j)$ with some $({x^\prime}_i^j, {\tilde{x}}_i^{\prime j})$, forming $p_{\bar{S}^j}$ and $p_{\bar{\tilde{S}}^j}$. Since the $(x_\ell^j, \tilde{x}_\ell^j)$ are identically distributed, by symmetry we can set $i = 1$. We then bound
\begin{align*}
    &\left.\gW_1(\phi_\# p_{S^j}, \phi_\#p_{\tilde{S}^j})  - \gW_1(\phi_\# p_{\bar{S}^j}, \phi_\#p_{\bar{\tilde{S}}^j}) \right.
    \\
    &= \left.\sup_{\Lip(f) \leq 1} \frac{1}{k}\left((f(\phi(x_1^j))- f(\phi(\tilde{x}_1^j))) + \sum_{\ell = 2}^k (f(\phi(x^j_\ell))-f(\phi(\tilde{x}^j_\ell)))\right)\right.\\
    &\qquad\left.- \sup_{\Lip(f) \leq 1} \frac{1}{k}\left((f(\phi({x'_1}^j))- f(\phi(\tilde{x}_1^{\prime j}))) + \sum_{\ell = 2}^k (f(\phi(x_\ell^j))-f(\phi(\tilde{x}_\ell^j)))\right)\right.\\
    &\leq  \frac{1 }{k}\sup_{\Lip(f) \leq 1} \left(f(\phi(x_1^j)) - f(\phi(\tilde{x}_1^j)) + f(\phi(x_1^{\prime j})) - f(\phi(\tilde{x}_1^{\prime j})) \right)  \\
    &\leq  \frac{1 }{k}\sup_{\Lip(f) \leq 1} \left(f(\phi(x_1^j)) - f(\phi(\tilde{x}_1^j)) \right) + \frac{1 }{k} \sup_{\Lip(f) \leq 1} \left( f(\phi(x_1^{\prime j})) - f(\phi(\tilde{x}_1^{\prime j})) \right)  \\
    &\leq \frac{\|\phi(x_1^j) - \phi(x^{\prime j}_1)\| + \|\phi(x_1^{\prime j}) - \phi(\tilde{x}_1^{\prime j})\|}{k},
    \\
    &\leq \frac{2B}{k}
\end{align*}
where we have used in the third inequality the fact that the $\sup$ is a contraction ($\sup_{h} A(h)-\sup_{h}B(h)\leq \sup_{h}(A(h)-B(h))$), and  the definition of the Lipschitzity in the fourth inequality. By symmetry and scaling the right hand side with $\frac{1}{n}$, we  have :
\[\left| \frac{1}{n}\sum_{j=1}^n\gW_1(\phi_\# p_{S^j}, \phi_\#p_{\tilde{S}^j})  - \frac{1}{n}\sum_{j=1}^n\gW_1(\phi_\# p_{\bar{S}^j}, \phi_\#p_{\bar{\tilde{S}}^j})\right| \leq \frac{2B}{kn}.\]
We are now ready to apply the McDiarmid Inequality with $nk$ samples, which yields:
\begin{align*}
\sP(\Var_k(\phi_\# \mu) - \widehat{\Var}_{k,n}(\phi_\# \mu) \geq t) \leq \exp\left(\frac{-t^2 nk }{2B^2}\right).
\end{align*}
Setting the probability to be less than $\delta$ and solving for $t$, we can see that this probability is less than $\delta$ if and only if $t \geq \sqrt{\frac{2B^2 \log(1 / \delta)}{nk}}$. Therefore, with probability at least $1 - \delta$,
\begin{align*}
    \E_{S,\tilde{S}}[ \gW_1(\phi_\# p_{S}, \phi_\#p_{\tilde{S}})] \leq \frac{1}{n}\sum_{j=1}^n\gW_1(\phi_\# p_{S^j},  \phi_\# p_{\tilde{S}^j}) + \sqrt{\frac{2B^2\log(1 / \delta)}{nk}}.
\end{align*}
\end{proof}


\begin{proof}[Proof of Corollary \ref{cor_full}]
For each class $c \in \gY$, we obtain $m_c$ samples $\{(x_i, y_i)\}_{i=1}^n$. Therefore, to compute $\widehat{\Var}_{k,n}(\phi_\# \mu_c)$, the largest $k$ for a specific $n$ is $\lfloor m_c / 2n \rfloor$. By Lemma \ref{lemma_err_encoder} and applying union bounds for each class (using confidence $\delta/2K$ for each) and completes the proof.
\end{proof}

\subsection{Proof of Empirical Variance}
\begin{proof}[Proof of Theorem \ref{thm:varVanillaESW1M}]
We use the Efron Stein inequality: 
\begin{lemma}[Efron Stein Inequality]
Let $X:=(X_1,\ldots,X_m)$ be an $m$-tuple of $\mathcal{X}$-valued independent random variables, and let $X'_i$ be independent copies of $X_i$ with the same distribution. Suppose $g:\mathcal{X}^m\to\mathbb{R}$ is a map, and define $X^{(i)} = (X_1,\dots,X_{i-1}, X'_i,X_{i+1} \dots X_m)$. Then 
\vspace{2mm}
\begin{equation}
    \mathrm{Var}(g(X)) \leq \frac{1}{2} \sum_{i=1}^m \mathbb{E}\left[(g(X) - g(X^{(i)}))^2\right].
\end{equation}
\end{lemma}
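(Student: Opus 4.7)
The plan is to apply the Efron–Stein inequality to $g = \widehat{\mathrm{Var}}_{k,n}(\phi_\#\mu)$, viewed as a deterministic function of the $2nk$ i.i.d.\ samples $\{x_\ell^j, \tilde{x}_\ell^j\}_{j \in [n], \ell \in [k]}$ drawn from $\mu$. So the first thing I would do is note that these samples are mutually independent and that $g$ is symmetric in them, so it suffices by symmetry to analyze the effect of perturbing a single coordinate, say $x_1^1$, replaced by an independent copy ${x'}_1^1 \sim \mu$, producing a perturbed tuple $X^{(1,1)}$.

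The key step is to bound $(g(X) - g(X^{(1,1)}))^2$. Observe that only the single term $\gW_1(\phi_\#\mu_{S^1}, \phi_\#\mu_{\tilde{S}^1})$ in the average $\tfrac{1}{n}\sum_j \gW_1(\phi_\#\mu_{S^j}, \phi_\#\mu_{\tilde{S}^j})$ depends on $x_1^1$; every other term is identical under the perturbation. I would then use Kantorovich–Rubinstein duality exactly as in the proof of Lemma~\ref{lemma_err_encoder}: writing $\gW_1$ as a supremum over $1$-Lipschitz functions, applying the fact that $\sup$ is a contraction ($|\sup_f A(f) - \sup_f B(f)| \le \sup_f |A(f) - B(f)|$), and using Lipschitzity of the test function, one obtains
\begin{equation*}
    |g(X) - g(X^{(1,1)})| \;\le\; \frac{\|\phi(x_1^1) - \phi({x'}_1^1)\|}{nk}.
\end{equation*}
The $1/k$ comes from averaging over the $k$ samples inside a single $\gW_1$, and the $1/n$ from averaging the $n$ Wasserstein terms.

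Squaring and taking expectation, since $x_1^1, {x'}_1^1$ are i.i.d.\ from $\mu$, the elementary identity $\mathbb{E}\|\phi(X)-\phi(X')\|^2 = 2\mathrm{Var}_\mu(\phi(X))$ gives
\begin{equation*}
    \mathbb{E}\bigl[(g(X) - g(X^{(1,1)}))^2\bigr] \;\le\; \frac{2\mathrm{Var}_\mu(\phi(X))}{n^2 k^2}.
\end{equation*}
By symmetry the same bound holds when any of the $2nk$ coordinates is resampled (including any $\tilde{x}_\ell^j$). Summing the Efron–Stein estimate over all $2nk$ coordinates and plugging in the given Efron–Stein inequality with its $\tfrac{1}{2}$ factor yields $\mathrm{Var}[g(X)] \leq \tfrac{1}{2}\cdot 2nk \cdot \tfrac{2\mathrm{Var}_\mu(\phi(X))}{n^2k^2} = \tfrac{2\mathrm{Var}_\mu(\phi(X))}{nk}$, which is in fact slightly sharper than the stated $\tfrac{4\mathrm{Var}_\mu(\phi(X))}{nk}$; a looser triangle-inequality expansion $\|\phi(x)-\phi(x')\|^2 \leq 2\|\phi(x)-\mathbb{E}\phi\|^2+2\|\phi(x')-\mathbb{E}\phi\|^2$ gives exactly the stated constant.

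The main obstacle is really just the bounded-difference computation, and that has already been carried out essentially verbatim in the proof of Lemma~\ref{lemma_err_encoder}. The refinement here is to retain $\|\phi(x_1^1) - \phi({x'}_1^1)\|$ as a random quantity rather than bounding it immediately by the diameter $B$, so that after squaring and taking expectation we obtain $\mathrm{Var}_\mu(\phi(X))$ in place of $B^2$. Everything else is accounting: verifying that a single replacement affects only one of the $n$ terms, verifying the symmetry across the $2nk$ indices, and summing.
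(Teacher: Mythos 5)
What you have written is not a proof of the quoted Efron--Stein inequality; it is an application of it. The paper itself treats Efron--Stein as a known tool (stating it as an inner lemma without proof) and uses it to establish Theorem~\ref{thm:varVanillaESW1M}, the variance bound on the empirical $k$-variance; your argument targets that theorem, not the lemma. If you actually wanted to prove Efron--Stein you would need an entirely different idea: the standard route expands $g(X) - \mathbb{E}\,g(X)$ as a telescoping sum of martingale differences $D_i = \mathbb{E}[g(X)\mid X_1,\dots,X_i] - \mathbb{E}[g(X)\mid X_1,\dots,X_{i-1}]$, uses orthogonality of these increments to write $\mathrm{Var}(g(X)) = \sum_i \mathbb{E}[D_i^2]$, and then bounds each $\mathbb{E}[D_i^2]$ by $\tfrac{1}{2}\mathbb{E}[(g(X)-g(X^{(i)}))^2]$ via conditional Jensen together with the i.i.d.\ resampling symmetry. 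None of the Kantorovich--Rubinstein machinery enters there.

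Reading your proposal as intended for Theorem~\ref{thm:varVanillaESW1M}: it is correct, and its structure matches the paper's --- dualize each $\gW_1$, use that the supremum is a contraction to obtain a per-coordinate bounded difference, then feed that into Efron--Stein. The one genuine difference is the choice of Efron--Stein coordinates. You resample one of the $2nk$ individual samples at a time, getting $|g(X)-g(X^{(i)})| \le \|\phi(x)-\phi(x')\|/(nk)$ and, after squaring, summing over $2nk$ coordinates, and using $\mathbb{E}\|\phi(x)-\phi(x')\|^2 = 2\,\Var_\mu(\phi(X))$, the constant $2\,\Var_\mu(\phi(X))/(nk)$. The paper instead treats the $nk$ pairs $(x_\ell^j,\tilde{x}_\ell^j)$ as coordinates and resamples a pair at once, which introduces a cross term $(\mathbb{E}\|\phi(x)-\phi(x')\|)^2$ that it then bounds by Jensen, arriving at the stated $4\,\Var_\mu(\phi(X))/(nk)$. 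Both are legitimate uses of Efron--Stein; your single-sample bookkeeping is cleaner and tighter by a factor of two, and you correctly identify the looser expansion that recovers the paper's constant.
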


Consider $\frac{1}{n} \sum_{j = 1}^n \W_1(\hat{\mu}^j_k,\hat{\mu}'^j_k)$ as a function of the $nk$ independent samples from which it is computed, each sample being a pair $(x_i^j, y_i^j)$. Using Kantorovich--Rubinstein duality, we have the general formula:
\[
\W_1(P,Q) = \sup_{\|f\|_{\mathrm{Lip}}\leq 1}  \E_P[f] - \E_Q[f]
\]
where $\|\cdot\|_{\mathrm{Lip}}$ is the Lipschitz norm. In our case, separately for each $j$, we can write
\begin{align*}
    \W_1(\hat{\mu}^j_k,\hat{\mu}'^j_k) = \W_1\left(\frac{1}{k} \sum_{\ell = 1}^k \delta_{x^j_\ell}, \frac{1}{k} \sum_{\ell = 1}^k \delta_{y^j_\ell}\right)
    = \sup_{\|f\|_{\mathrm{Lip}}\leq 1} \frac{1}{k} \sum_{\ell = 1}^k (f(x^j_\ell)- f(y^j_\ell)).
\end{align*}
Recall that the $(x^j_\ell, y^j_\ell)$ are independent across $\ell$ and $j$. Consider replacing one of the elements $(x^j_i, y^j_i)$ with some $(x'^j_i, y'^j_i)$, forming $\bar{\mu}^j_k$ and $\bar{\mu}'^j_k$. Since the $(x^j_\ell, y^j_\ell)$ are identically distributed, by symmetry we can set $i = 1$. We then bound
\begin{align*}
    \left.\W_1(\hat{\mu}^j_k,\hat{\mu}'^j_k) - \W_1(\bar{\mu}^j_k,\bar{\mu}'^j_k)\right.
    &= \left.\sup_{\|f\|_{\mathrm{Lip}}\leq 1} \frac{1}{k}\left((f(x^j_1)- f(y^j_1)) + \sum_{\ell = 2}^k (f(x^j_\ell)-f(y^j_\ell))\right)\right.\\
    &\qquad\left.- \sup_{\|f\|_{\mathrm{Lip}}\leq 1} \frac{1}{k}\left((f(x'^j_1)- f(y'^j_1)) + \sum_{\ell = 2}^k (f(x^j_\ell)-f(y^j_\ell))\right)\right.\\
    &\leq  \frac{1 }{k}\sup_{\|f\|_{\mathrm{Lip}}\leq 1} (f(x^j_1) - f(x'^j_1)) + (f(y'^j_1) - f(y^j_1)) \\
    &\leq \frac{\|x_1^j - x'^j_1\| + \|y_1^j - y'^j_1\|}{k},
\end{align*}
where we have used the definition of the Lipschitz norm. By symmetry, this yields (scaling by $\frac{1}{n}$ as in the expression in the theorem)
\[\left|\frac{1}{n}\W_1(\hat{\mu}^j_k,\hat{\mu}'^j_k) - \frac{1}{n}\W_1(\bar{\mu}^j_k,\bar{\mu}'^j_k)\right| \leq  \frac{\|x_1^j - x'^j_1\| + \|y_1^j - y'^j_1\|}{kn}\]
It follows that:
\begin{align*}
\mathbb{E}\left[\left(\frac{1}{n}\W_1(\hat{\mu}^j_k,\hat{\mu}'^j_k) - \frac{1}{n}\W_1(\bar{\mu}^j_k,\bar{\mu}'^j_k)\right)^2\right] &\leq \frac{\mathbb{E}\left[(\|x_1^j - x'^j_1\| + \|y_1^j - y'^j_1\|)^2\right]}{k^2n^2}\\
&= \frac{2(\mathbb{E}_{x,x'\sim \mu}\|x-x'\|^2 + (\mathbb{E}_{x,x'\sim \mu} \|x-x'\|)^2) }{k^2n^2}
\end{align*}
for each of the independent $nk$ random variables $(x_i^j, y_i^j)$, where we have used the fact that $x_i^j$ and $y'^j_i$ are i.i.d. We can substitute this into the Efron Stein inequality above to obtain
\begin{align*}
\mathrm{Var}\left[ \widehat{\mathrm{Var}}_{k,n}(\mu)\right] &\leq \frac{\mathbb{E}_{x,x'\sim \mu}\|x-x'\|^2 + (\mathbb{E}_{x,x'\sim \mu} \|x-x'\|)^2 }{kn}\\
&=\frac{2\Var_{\mu}(X) +(\mathbb{E}_{x,x'\sim \mu} \|x-x'\|)^2}{kn}\\
&\leq \frac{2\Var_{\mu}(X) +\mathbb{E}_{x,x'\sim \mu} \|x-x'\|^2}{kn}\\
&\leq \frac{4 \Var_{\mu}(X)}{kn}
\end{align*}
where used that the variance $\Var_{\mu}(X)=\frac{1}{2}\mathbb{E}_{x,x'\sim \mu}\|x-x'\|^2$, and Jensen inequality.

\end{proof}


\subsection{Proof of Proposition \ref{prop_cover}}
We will prove these two arguments separately with the following two propositions.

\begin{proposition}
For any $\phi_\# \mu \in \Prob(\sR^d)$, we have $\Var_m(\phi_\# \mu) \leq \gO( m^{-1/ d})$ for $d > 2$.
\end{proposition}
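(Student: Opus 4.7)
The plan is to reduce the two-sample $k$-variance to the one-sample empirical Wasserstein rate and then invoke the standard empirical convergence result for $W_1$.

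First, I would use the fact that $W_1$ is a metric on the space of probability measures with finite first moment, so for any coupling of $S,\tilde S \sim \mu^m$,
\begin{equation*}
W_1(\phi_\#\mu_S, \phi_\#\mu_{\tilde S}) \le W_1(\phi_\#\mu_S, \phi_\#\mu) + W_1(\phi_\#\mu, \phi_\#\mu_{\tilde S}).
\end{equation*}
Taking expectations and exploiting that $S$ and $\tilde S$ are i.i.d.\ samples of size $m$ from $\mu$, this gives
\begin{equation*}
\Var_m(\phi_\#\mu) \;=\; \E_{S,\tilde S \sim \mu^m}\bigl[ W_1(\phi_\#\mu_S,\phi_\#\mu_{\tilde S}) \bigr] \;\le\; 2\, \E_{S\sim \mu^m}\bigl[ W_1(\phi_\#\mu_S,\phi_\#\mu) \bigr].
\end{equation*}
Thus it suffices to control the expected empirical $W_1$ distance $\E[W_1(\hat\nu_m,\nu)]$, where $\nu=\phi_\#\mu \in \Prob(\sR^d)$ and $\hat\nu_m$ is its empirical version from $m$ i.i.d.\ draws.

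Second, I would invoke the classical rate for empirical $W_1$ convergence in dimension $d>2$ (Dudley, or more quantitatively Weed--Bach \cite{weed2017sharp}), which states that for any $\nu$ with sufficiently rapidly decaying tails (e.g.\ bounded support, as is the case here since $\phi$ is taken bounded in our setting),
\begin{equation*}
\E\bigl[W_1(\hat\nu_m,\nu)\bigr] \;\le\; C_\nu \cdot m^{-1/d}.
\end{equation*}
Combining with the triangle inequality bound above yields $\Var_m(\phi_\#\mu) \le 2C_\nu \cdot m^{-1/d} = \gO(m^{-1/d})$, which is the claimed rate. The intrinsic-dimension refinement quoted in the proposition (the $m^{-1/d'}$ rate for approximately $d'$-dimensional support) then follows by applying the Wasserstein version of the covering/upper-Wasserstein-dimension bound from \cite{weed2017sharp} to $\nu$ instead of the ambient rate.

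The ``hard part'' is essentially invoking the right black box: verifying that the hypotheses of the Weed--Bach bound apply to $\phi_\#\mu$ (bounded support / finite moments is enough), and — for the intrinsic-dimension statement — being careful about which notion of dimension one uses, since their result is phrased in terms of upper Wasserstein dimension rather than, say, Minkowski dimension. No sharp constants are needed here, since the claim is only a big-$\gO$ statement, so I would not attempt to track them.
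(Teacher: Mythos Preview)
Your proposal is correct and follows essentially the same approach as the paper: reduce $\Var_m$ to the one-sample rate via the triangle inequality, then invoke a black-box empirical $W_1$ convergence result. The only cosmetic difference is the choice of reference---the paper cites Fournier--Guillin (2015), which uses a finite-moment hypothesis, whereas you cite Weed--Bach \cite{weed2017sharp} with a bounded-support hypothesis; both are valid in the paper's setting and yield the same $m^{-1/d}$ rate.
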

\begin{proof}
The result is an application of Theorem 1 of \citep{fournier2015rate}.
\begin{theorem}[(\citet{fournier2015rate})]
Let $\mu \in \Prob(\sR^d)$ and let $p > 0$. Define $M_q(\mu) = \int_{\sR^d} |x|^q \mu(dx)$ be the $q$-th moment for $\mu$ and assume $M_q(\mu) \leq \infty$ for some $q > p$. There exists a constant $C$ depending only on $p, d, q$ such that, for all $m \geq 1$, $p \in (0, d/2)$ and $q \neq d/(d-p)$,
\begin{align*}
    \E_{S \sim \mu^m}[\gW_p(\mu_S, \mu)] \leq C M_q^{p/q} (m^{-p/d} + m^{-(q-p)/q}).
\end{align*}
\end{theorem}
By the triangle inequality and setting $p=1$, we have
\begin{align*}
\Var_m(\phi_\# \mu) = \E_{S, \tilde{S} \sim \mu^m}[\gW_1(\phi_\# \mu_S, \phi_\# \mu_{\tilde{S}})] &\leq 2\E_{S \sim \mu^m}[\gW_1(\mu, \mu_S)] 
\\
&\leq 2 C M_q^{1/q} (m^{-1/d} + m^{-(q-1)/q}).
\end{align*}
Note that the term $m^{-(q-1)/q}$ is small and can be removed. For instance, plugging $q=2$, we can see that the first term dominates the second term which completes the proof for the first argument.
\end{proof}

We then demonstrate the case when the measure has low-dimensional structure. 
\begin{definition} \textnormal{\textbf{(Low-dimensional Measures)}}
Given a set $S \subseteq \gX$, the $\epsilon$-covering number of $S$, denoted as $\gN_\epsilon(S)$, is the minimum $n$ such that there exists $n$ closed balls $B_1, \cdots, B_n$ of diameter $\epsilon$ such that $S \subseteq \bigcup_{1\leq i\leq n} B_i$. For any $S \subseteq X$, the $\epsilon$-fattening of $S$ is $S_\epsilon:=\{y: D(y, S) \leq \epsilon\}$, where $D$ denotes the Euclidean distance.
\end{definition}
\begin{proposition}
Suppose $\textnormal{supp}(\phi_\# \mu) \subseteq S_\epsilon$ for some $\epsilon > 0$, where $S$ satisfies $\gN_{\epsilon^\prime}(S) \leq (3 \epsilon^\prime)^{-d}$ for all $\epsilon^\prime \leq 1/27$ and some $d > 2$. Then, for all $m \leq (3 \epsilon)^{-d}$, we have $\Var_{m}(\phi_\# \mu) \leq 2 C_1 m^{-1/d}$, where $C_1 = 54 + 27 / (3^{\frac{d}{2}- 1} - 1)$.
\end{proposition}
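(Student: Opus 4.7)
The plan is to adapt the chaining argument of Weed and Bach for the empirical Wasserstein rate under covering assumptions on the support \citep{weed2017sharp}. First I would symmetrize using the triangle inequality for $\gW_1$:
\begin{align*}
\Var_m(\phi_\#\mu) = \E_{S,\tilde S\sim\mu^m}[\gW_1(\phi_\#\mu_S,\phi_\#\mu_{\tilde S})] \;\leq\; 2\,\E_{S\sim\mu^m}[\gW_1(\phi_\#\mu,\phi_\#\mu_S)],
\end{align*}
which produces the factor of $2$ in front of $C_1$ and reduces the target to bounding the one-sample expected empirical Wasserstein distance by $C_1\, m^{-1/d}$.

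Next I would construct a hierarchy of partitions of $S_\epsilon$ at dyadic scales $r_k := 3^{-k}$ for $k \geq 3$. The threshold $k \geq 3$ corresponds to $r_k \leq 1/27$, which is precisely the regime where the assumed covering bound $\gN_{r_k}(S) \leq (3r_k)^{-d} = 3^{(k-1)d}$ applies. Inflating an $r_k$-cover of $S$ by $\epsilon$ yields a cover of $S_\epsilon$ by cells of diameter at most $4r_k$ whenever $r_k \geq \epsilon$, without increasing the cardinality. Feeding these partitions into the standard Weed--Bach partition bound and truncating at the largest scale $K^\star$ satisfying $3^{(K^\star-1)d} \leq m$ produces the rate $O(m^{-1/d})$. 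The hypothesis $m \leq (3\epsilon)^{-d}$ is exactly $m^{-1/d} \geq 3\epsilon$, which guarantees $r_{K^\star} \geq \epsilon$, so the fattening never interferes with the partition construction and the bound is sharp in its dependence on the intrinsic dimension $d$.

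Finally I would track the numerical constants. The leading mismatch term $r_{K^\star}$ at the finest scale contributes, after the factor of $2$ from symmetrization and absorbing one factor of $3$, the piece $54\,m^{-1/d}$. The chaining sum is a geometric series in $k$ with common ratio $3^{d/2-1} > 1$ (using $d>2$); its tail after truncation at $K^\star$ gives the piece $(27/(3^{d/2-1}-1))\,m^{-1/d}$. Adding these two contributions recovers $C_1 = 54 + 27/(3^{d/2-1}-1)$ exactly.

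The main obstacle is the constant-level bookkeeping: one must verify that inflating an $r_k$-cover of $S$ by $\epsilon$ preserves the $3^{(k-1)d}$ cardinality bound when $r_k \geq \epsilon$, confirm that the optimal truncation $K^\star$ falls in the regime $k \geq 3$ where the covering hypothesis is in force (this is where both $\epsilon \leq 1/27$ and $m \leq (3\epsilon)^{-d}$ are used), and check that the various factors of $3$ in the anchor and tail terms combine into the numerators $54$ and $27$ without off-by-one errors.
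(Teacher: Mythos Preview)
Your proposal is correct, and the symmetrization step via the triangle inequality is exactly what the paper does. The difference is that after reducing to the one-sample quantity $\E_{S}[\gW_1(\phi_\#\mu,\phi_\#\mu_S)]$, the paper simply invokes Proposition~15 of \citet{weed2017sharp} as a black box: that proposition already states, under precisely these covering hypotheses, that $\E_{S\sim\mu^m}[\gW_p^p(\mu,\mu_S)]\leq C_1 m^{-p/d}$ with $C_1 = 27^p\bigl(2 + 1/(3^{d/2-p}-1)\bigr)$. Setting $p=1$ gives $C_1 = 54 + 27/(3^{d/2-1}-1)$, and the factor of $2$ from the triangle inequality finishes the proof in two lines. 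Your plan instead reproves Weed--Bach's Proposition~15 from scratch via the dyadic chaining argument; this is correct and recovers the same constant, but it is more work than needed since the exact statement you want is already in the cited reference. The advantage of your route is that it is self-contained and makes transparent where each hypothesis ($d>2$, $m\leq(3\epsilon)^{-d}$, $\epsilon'\leq 1/27$) enters; the paper's route is shorter but opaque on those points.
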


\begin{proof}
an application of \citet{weed2017sharp}'s Proposition 15 for $p=1$. 
\begin{proposition}[(\citet{weed2017sharp})]
Suppose $\textnormal{supp}(\mu) \subseteq S_\epsilon$ for some $\epsilon > 0$, where $S$ satisfies $\gN_{\epsilon^\prime}(S) \leq (3 \epsilon^\prime)^{-d}$ for all $\epsilon^\prime \leq 1/27$ and some $d > 2p$. Then, for all $m \leq (3 \epsilon)^{-d}$, we have
\begin{align*}
    \E_{S \sim \mu^m}[\gW_p^p(\mu, \mu_S)] \leq C_1 m^{-p/d},
\end{align*}
where 
\begin{align*}
C_1 = 27^p\left( 2 + \frac{1}{3^{\frac{d}{2} - p} - 1} \right).
\end{align*}
\end{proposition}
By the triangle inequality and setting $p=1$, we have
\begin{align*}
\Var_m(\phi_\# \mu) = \E_{S, \tilde{S} \sim \mu^m}[\gW_1(\phi_\# \mu_S, \phi_\# \mu_{\tilde{S}})] \leq 2\E_{S \sim \mu^m}[\gW_p^p(\mu, \mu_S)] \leq 2 C_1 m^{-1/d},
\end{align*}
where $C_1 = 54 + 27 / (3^{\frac{d}{2}- 1} - 1)$.

\end{proof}

\subsection{Proof of Proposition \ref{prop_cluster}}

\begin{proof}
The results is an application of \citet{weed2017sharp}'s Proposition 13 for $p=1$. 
\begin{proposition}[\citet{weed2017sharp}]
If $\mu$ is $(n, \Delta)$-clusterable, then for all $m \leq n(2\Delta)^{-2p}$, 
\begin{align*}
    \E_{S \sim \mu^m}[\gW_p^p(\mu, \mu_S)] \leq (9^p + 3) \sqrt{\frac{n}{m}}.
\end{align*}
\end{proposition}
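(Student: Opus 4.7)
The plan is to reduce the empirical-convergence question for $\mu$ to that of an $n$-atom discrete distribution supported on the cluster centers, for which multinomial concentration delivers a $\sqrt{n/m}$ rate. The cluster radius $\Delta$ controls the rounding error between $\mu$ and its discrete proxy, and the hypothesis $m \leq n(2\Delta)^{-2p}$ is precisely the condition that makes this rounding error negligible compared to the target rate.

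Let $B_1,\dots,B_n$ be balls of radius $\Delta$ covering $\mathrm{supp}(\mu)$, let $A_1,\dots,A_n$ be an induced disjoint partition, let $c_i$ be the center of $B_i$, and let $T:\mathrm{supp}(\mu)\to\{c_1,\dots,c_n\}$ send each $x$ to the center of its cluster. Define the discrete pushforwards $\nu := T_\#\mu = \sum_i \pi_i \delta_{c_i}$ with $\pi_i := \mu(A_i)$, and $\nu_S := T_\#\mu_S = \sum_i \hat\pi_i \delta_{c_i}$; the atom weights $(m\hat\pi_i)_i$ are multinomial with parameters $(m;\pi_1,\dots,\pi_n)$. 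The argument has three steps. \emph{First}, since $\|x-T(x)\|\leq \Delta$ pointwise, the plan $(\mathrm{id},T)_\#\mu$ yields $\gW_p^p(\mu,\nu)\leq \Delta^p$ and similarly $\gW_p^p(\mu_S,\nu_S)\leq \Delta^p$. \emph{Second}, because $\nu$ and $\nu_S$ share the same $n$-point support, a coupling that pins the common mass $\min(\pi_i,\hat\pi_i)$ at each atom and only relocates the excess yields a $\gW_p^p$-cost at most (ambient diameter)${}^p$ times the excess mass $\tfrac12\sum_i|\pi_i-\hat\pi_i|$; Cauchy--Schwarz combined with Jensen on the multinomial fluctuations gives
\[
\E\!\left[\sum_i |\pi_i - \hat\pi_i|\right] \leq \sqrt{n}\cdot\sqrt{\tfrac{1}{m}\sum_i \pi_i(1-\pi_i)} \leq \sqrt{n/m}.
\]
\emph{Third}, combine via the Wasserstein triangle inequality and the power-mean bound $(a+b+c)^p\leq 3^{p-1}(a^p+b^p+c^p)$ to obtain
\[
\E[\gW_p^p(\mu,\mu_S)] \leq 3^{p-1}\bigl(2\Delta^p + C\sqrt{n/m}\bigr),
\]
and invoke the hypothesis $(2\Delta)^p \leq \sqrt{n/m}$ to fold the $\Delta^p$ term into the target rate; careful constant tracking then recovers the stated $(9^p+3)\sqrt{n/m}$.

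The main obstacle is the between-cluster transport in Step 2: its naive cost invokes the ambient diameter of $\{c_1,\dots,c_n\}$, which is not pinned down by the clusterability data $(n,\Delta)$ alone. Weed--Bach sidestep this either by normalizing to a bounded domain (so the diameter enters only through absolute constants) or, more elegantly, by a dyadic covering that telescopes across scales and avoids invoking any single global diameter. Reproducing the precise constant $9^p+3$ requires the hierarchical version of this argument together with a slightly more refined triangle chain than the single-scale one sketched here.
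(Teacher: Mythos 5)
The paper does not actually prove this proposition: it is quoted verbatim from Weed and Bach (2019, Proposition~13) and applied as a black box. So there is no ``paper's proof'' to compare against, and the relevant question is whether your reconstruction of the cited result holds up.

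It does, and you are too pessimistic about your own argument. The obstacle you flag in Step~2 --- that the between-cluster transport cost invokes the ambient diameter of $\{c_1,\dots,c_n\}$, which is not controlled by $(n,\Delta)$ alone --- is resolved not by the dyadic telescope but by the standing normalization Weed--Bach impose throughout the relevant section: all measures are supported on a set of diameter at most $1$. (The same normalization is implicit in the other Weed--Bach result the paper uses, Proposition~15, whose hypothesis ``$\gN_{\epsilon'}(S)\leq(3\epsilon')^{-d}$ for all $\epsilon'\leq 1/27$'' only makes sense on a diameter-$1$ set.) With $D\leq 1$, your single-scale chain closes cleanly: the pinning coupling gives $\gW_p^p(\nu,\nu_S)\leq D^p\,d_{\mathrm{TV}}(\nu,\nu_S)$ pathwise, your multinomial Cauchy--Schwarz gives $\E\,d_{\mathrm{TV}}(\nu,\nu_S)\leq\tfrac12\sqrt{n/m}$, and the triangle inequality with the power-mean bound yields
\begin{align*}
\E\!\left[\gW_p^p(\mu,\mu_S)\right] \;\leq\; 3^{p-1}\!\left(2\Delta^p + \tfrac12\sqrt{n/m}\right) \;\leq\; 3^{p-1}\!\left(2^{1-p}+\tfrac12\right)\sqrt{n/m},
\end{align*}
where the last step uses $(2\Delta)^p\leq\sqrt{n/m}$ from the hypothesis. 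This constant is in fact \emph{smaller} than the stated $9^p+3$ for every $p\geq1$ (e.g.\ $1.5$ vs.\ $12$ at $p=1$), so you have proved a strictly stronger bound; the looser constant in Weed--Bach comes from their more general multiscale machinery, which you do not need here. The only genuine omission is that the final paragraph of your sketch talks yourself out of a proof that is already complete: make the diameter normalization explicit, plug the numbers in, and you are done --- no hierarchical covering required.
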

Similarly, by the triangle inequality, we have
\begin{align*}
\Var_m(\phi_\# \mu) = \E_{S, \tilde{S} \sim \mu^m}[\gW_1(\phi_\# \mu_S, \phi_\# \mu_{\tilde{S}})] \leq 2\E_{S \sim \mu^m}[\gW_p^p(\mu, \mu_S)] \leq 24\sqrt{\frac{n}{m}}.
\end{align*}
\end{proof}

\section{Feature Separation and Margin}

\begin{proof}[Proof of Lemma \ref{lemma:sepMaxMargin}]
Since $f_{y}$ and $f_{y'}$ are $L$ lipchitz, it follows that $g(z)=f_{y}(z)-f_{y'}(z)$ 
is $2L$ Lipchitz, and hence $\frac{g}{2L}$ is $Lip_1$. 
\begin{align*}
\W_{1}(\phi_{\#}(\mu_{y}), \phi_{\#}(\mu_{y'}))&= \sup_{f\in Lip_1} \mathbb{E}_{x\sim p_y} f(\phi(x)) - \mathbb{E}_{x\sim \mu_{y'}}f(\phi(x))\\
&\geq\frac{1}{2L}\left( \mathbb{E}_{x\sim p_y} g(\phi(x)) - \mathbb{E}_{x\sim \mu_{y'}}g(\phi(x))\right) \\
&=\frac{1}{2L}\left( \mathbb{E}_{x\sim p_y} [f_y(\phi(x))- f_{y'}(\phi(x)) ]+ \mathbb{E}_{x\sim \mu_{y'}} [f_{y'}(\phi(x))- f_{y}(\phi(x))]\right)\\
&\geq\frac{1}{2L}\left( 2\gamma\right)\text{~(By Assumption on $f$)}\\
&=\frac{\gamma}{L}
\end{align*}
\end{proof}



\begin{proof}[Proof of Lemma \ref{lemma:FS}]
We follow the same notation of the proof above but we don't make any assumption on $f_y,f_{y'}$ except that they are $Lip_{L}$:
\begin{align*}
&\W_{1}(\phi_{\#}(\mu_{y}), \phi_{\#}(\mu_{y'}))= \sup_{f\in Lip_1} \mathbb{E}_{x\sim p_y} f(\phi(x)) - \mathbb{E}_{x\sim \mu_{y'}}f(\phi(x))\\
&\geq\frac{1}{2L}\left( \mathbb{E}_{x\sim p_y} g(\phi(x)) - \mathbb{E}_{x\sim \mu_{y'}}g(\phi(x))\right) \\
&=\frac{1}{2L}\left( \int [f_y(z)- f_{y'}(z)) ]d\phi_{\#}(\mu_{y})(z)+ \int [f_{y'}(z)- f_{y}(z)]d\phi_{\#}(\mu_{y'})(z)\right)\\
&=\frac{1}{2L}\left(\gamma -\int [\gamma -(f_y(z)- f_{y'}(z)) ]d\phi_{\#}(\mu_{y})(z) + \gamma -\int [\gamma -(f_{y'}(z)- f_{y}(z)) ]d\phi_{\#}(\mu_{y'})(z)  \right)\\
&\geq \frac{1}{2L}\left(2\gamma - \int[\gamma - (f_y(z)- f_{y'}(z)) ]_+d\phi_{\#}(\mu_{y})(z)-\int [\gamma -(f_{y'}(z)- f_{y}(z)) ]_+d\phi_{\#}(\mu_{y'})(z) \right)
\end{align*}
where the last inequality follows from the fact that for $t\in \mathbb{R}$, we have $t \leq [t]_{+}= \max(t,0)$
Hence we have:
\begin{align*}
&\W_{1}(\phi_{\#}(\mu_{y}), \phi_{\#}(\mu_{y'})) 
\\
&\geq \frac{1}{L}\left(\gamma -\frac{1}{2} \left(\mathbb{E}_{\mu_{y}} [\gamma -f_{y}(\phi(x))+f_{y'}(\phi(x))]_{+} + \mathbb{E}_{\mu_{y'}} [\gamma -f_{y'}(\phi(x))+f_{y}(\phi(x))]_{+} \right)\right).
\end{align*}
\end{proof}

\begin{lemma}[Robust Feature Separation and Max-Gradient-Margin classifiers ]
Let $\mathcal{F}$ be function class satisfying assumption 1 and assumption 2 (ii) in \citep{gao2017wasserstein} (piece-wise smoothness and  growth and jump of the gradient) .
Assume $f_{y},f_{y'} \in Lip_{L} \cap \mathcal{F}$ and $M$ bounded. 
Assume 
that $f$ is such that for all $y$ :
\[f_{y}(\phi(x)) >f_{y'}(\phi(x))+\gamma + \delta_{n} || \nabla_z f_{y}(\phi(x))- \nabla_{x}f_{y'}(\phi(x)) ||_2, \forall x \in supp({\hat{\mu}_{y}}) , \forall y'\neq y\]
Then:
\[\sup_{\mu, \W_{\infty}(\mu,\hat{\mu})\leq \delta_n}\W_1(\phi_{\#}{\mu}_{y},\phi_{\#}{\mu}_{y'}) \geq  \frac{\gamma}{L}-\delta_n M -\varepsilon_n.  \]
where $\varepsilon_n=O(1/\sqrt{n})$  $\hat{\mu}$ is defined as follows: $\hat{\mu}(x,c)$ be such that $\hat{\mu}(x|c=1)=\hat{\mu}_{y}(x)$ and $\hat{\mu}(x|c=-1)=\hat{\mu}_{y'}(x)$, let $\hat{\mu}(c=1)=\hat{\mu}(c=-1)=\frac{1}{2}$(similar definition holds for $\mu$).
\label{lemma:RobustSeparation}
\end{lemma}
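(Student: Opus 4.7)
The proof plan follows the KR-duality template of Lemma~\ref{lemma:FS}, but now the max-gradient-margin hypothesis on $\hat\mu$ is converted into a separation statement for every distribution in a $\W_\infty$-ball around $\hat\mu$. First, since $f_y,f_{y'}\in \mathrm{Lip}_L$, the function $g := (f_y-f_{y'})/(2L)$ is $1$-Lipschitz, so Kantorovich--Rubinstein duality gives, for every admissible joint $\mu$,
\begin{equation*}
\W_1(\phi_\#\mu_y,\phi_\#\mu_{y'})\;\ge\;\E_{x\sim\mu_y}[g(\phi(x))]-\E_{x\sim\mu_{y'}}[g(\phi(x))].
\end{equation*}
Because the labels are discrete and $\delta_n$ is assumed small relative to the inter-label separation in the product metric, the joint $\W_\infty$ constraint splits into two independent $\W_\infty$ constraints on the class-conditional marginals, so the supremum over $\mu$ can be taken componentwise over $\mu_y$ and $\mu_{y'}$ on the right-hand side.

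Next I would invoke the Gao--Kleywegt duality for a $\W_\infty$ ambiguity ball \citep{gao2017wasserstein}. Under the piecewise-smoothness and growth hypotheses on $\mathcal{F}$ cited in the statement, it yields, for a generic smooth $h$,
\begin{equation*}
\sup_{\W_\infty(\nu,\hat\mu_y)\le\delta_n}\E_\nu[h]\;=\;\E_{\hat\mu_y}[h]+\delta_n\,\E_{\hat\mu_y}[\|\nabla h\|_2]+R_n,
\end{equation*}
with an analogous identity for the infimum giving a $-\delta_n$ coefficient on the gradient penalty; the remainder $R_n$ collects a second-order Taylor term controlled by the smoothness bound $M$ and an empirical-process fluctuation of order $n^{-1/2}$. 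Applying this to $h = g\circ\phi$ on $\hat\mu_y$ (sup) and on $\hat\mu_{y'}$ (inf) and subtracting reduces the claim to lower-bounding the gradient-penalized gap between the two empirical expectations.

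Finally, the gradient-normalized margin assumption, in the form $(f_y-f_{y'})(\phi(x))\ge\gamma+\delta_n\|\nabla(f_y-f_{y'})(\phi(x))\|$ on $\mathrm{supp}(\hat\mu_y)$ and its symmetric counterpart on $\mathrm{supp}(\hat\mu_{y'})$, yields after taking expectations that the unnormalized gap $\E_{\hat\mu_y}[(f_y-f_{y'})\!\circ\!\phi]-\E_{\hat\mu_{y'}}[(f_y-f_{y'})\!\circ\!\phi]$ is at least $2\gamma$ plus a nonnegative multiple of $\delta_n\E[\|\nabla(f_y-f_{y'})\|]$. This nonnegative term reinforces (with the correct sign) the gradient penalty produced by the DRO expansion, so the net bound reduces to $2\gamma$ minus the aggregated remainders; dividing by the $2L$ normalization then yields the advertised $\gamma/L - \delta_n M - \varepsilon_n$.

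The principal obstacle will be the clean invocation of Gao--Kleywegt for $\W_\infty$ on the discrete empirical measure $\hat\mu$: one must verify their piecewise-smoothness and growth hypotheses for the composite $g\circ\phi$ (potentially through a ReLU-type network) and carefully package the Taylor remainder and the empirical-process fluctuation into the announced $\delta_n M+\varepsilon_n$ budget. A secondary technical point is justifying that, under the chosen product metric on $\mathcal{X}\times\mathcal{Y}$, the label-preserving coupling used to decouple the joint $\W_\infty$ ball into its class-conditional counterparts is indeed feasible for the relevant range of $\delta_n$.
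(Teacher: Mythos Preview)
Your high-level plan (KR duality with the witness $g=(f_y-f_{y'})/2L$, then a Gao--Kleywegt first-order expansion, then the margin hypothesis) is close in spirit, but two structural choices diverge from the paper and one of them is a genuine gap.

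\textbf{Where the $\delta_n M$ term actually comes from.} The paper does \emph{not} decompose the joint $\W_\infty$-ball into class-conditional balls. It writes $\W_1(\mu_y,\mu_{y'})=\sup_{f\in\mathrm{Lip}_1}2\E_{(c,x)\sim\mu}[cf(x)]$ and applies Gao--Kleywegt directly on the \emph{joint} space $(c,x)$ with $c$ treated as a real coordinate. The gradient penalty is therefore $\|\nabla_{(c,x)}[cf(x)]\|=\sqrt{f(x)^2+\|\nabla_x f(x)\|^2}\le |f(x)|+\|\nabla_x f(x)\|\le M+\|\nabla_x f(x)\|$. The $\delta_n M$ term is thus a \emph{first-order} contribution coming from the $\partial_c$ piece of the joint gradient, not a second-order Taylor remainder as you propose. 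The $\|\nabla_x f\|$ piece is then exactly absorbed by the gradient-normalized margin assumption $g-\delta_n\|\nabla g\|\ge \gamma/(2L)$ on $\hat\mu_y$ (and the symmetric inequality on $\hat\mu_{y'}$).

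\textbf{Why the class-conditional decomposition is problematic.} If the joint ball really split into independent balls on $\mu_y$ and $\mu_{y'}$ as you assume, then you would not need Gao--Kleywegt at all: simply take $\mu_y=\hat\mu_y$, $\mu_{y'}=\hat\mu_{y'}$ inside the sup to get $\sup_\mu \W_1\ge \E_{\hat\mu_y}[g]-\E_{\hat\mu_{y'}}[g]\ge \gamma/L$, using only the ordinary margin (the gradient correction in the hypothesis would be superfluous and no $\delta_n M$ would appear). That this trivializes the lemma is the signal that the paper's intended ambiguity set does \emph{not} fix the labels; the $\W_\infty$ perturbation is allowed to move mass in the $c$-coordinate, which is precisely what generates the $\delta_n M$ cost.

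\textbf{A second structural difference.} The paper also routes through a short minimax chain: it upper-bounds $\inf_{f\in\mathcal{F}}\sup_\mu[-2\E cf(x)]$ via Gao--Kleywegt and the margin, flips signs to get a lower bound on $\sup_{f\in\mathcal{F}}\inf_\mu$, uses $\sup\sup\ge\sup\inf$, enlarges $\mathcal{F}$ to $\mathrm{Lip}_1$, and finally swaps the two suprema. Your direct-witness argument bypasses this chain, which is neater, but it only becomes available after the decomposition step that the paper does not take.
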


\begin{proof}
Without Loss of generality assume $\phi(x)=x$. 
\begin{eqnarray*}
\W_{1}(\mu_{y},\mu_{y'})&=& \sup_{f\in Lip_{1}} \mathbb{E}_{\mu_y} f(x) -\mathbb{E}_{\mu_{y'}} f(x) \\
&=&\sup_{f\in Lip_{1}} \mathbb{E}_{(c,x)\sim \mu} 2cf(x)\\
&=&- \inf_{f\in Lip_1} -2\mathbb{E}_{(c,x)\sim \mu} cf(x)   
\end{eqnarray*}

This form of $\W_1$ suggests studying the following robust risk, for technical reason we will use another functional class $\mathcal{F} \subset Lip_1$ instead of $Lip_1$:
\[\inf_{f\in \mathcal{F} }\sup_{\mu , \W_{\infty}(\mu, \hat{\mu})\leq \delta_n}-2\mathbb{E}_{(c,x)\sim \mu} cf(x)\]

Applying here theorem 1 (1) of Gao et al \citep{gao2017wasserstein} see also example 12, for $\mathcal{F}$ of function satisfying assumption 1 and assumption 2 in Gao et al in addition to being lipchitz we have: 
\begin{align*}
\sup_{\mu , \W_{\infty}(\mu, \hat{\mu})\leq \delta_n} -2\mathbb{E}_{(c,x)\sim \mu} cf(x) \leq -2\mathbb{E}_{(c,x)\sim \mu} cf(x) + \delta_n 2\mathbb{E}_{\hat{\mu}} ||\nabla_{(c,x)}  cf(x)||+\varepsilon_n    
\end{align*}
Note that :
\[ 2\mathbb{E}_{(c,x)\sim \mu} cf(x)=\mathbb{E}_{\mu_y} f(x) -\mathbb{E}_{\mu_{y'}} f(x)  \]
and 
\[\nabla_{(c,x)}cf(x) =(f(x),c\nabla_xf(x)) \]
and 
\[||\nabla_{(c,x)}cf(x)|| = \sqrt{f(x)^2+ ||\nabla_x f(x)||^2}\leq |f(x)| + ||\nabla_xf(x)|| \leq M + ||\nabla_x f(x)|| \]
where we used \[\sqrt{a+b}\leq \sqrt{a}+\sqrt{b}, \text{ and } |f(x)|\leq M\]

Hence we have:
\begin{align*}
&\inf_{f\in \mathcal{F}}\sup_{\mu , \W_{\infty}(\mu, \hat{\mu})\leq \delta_n}-2\mathbb{E}_{(c,x)\sim \mu} cf(x)= \inf_{f\in \mathcal{F}} -2\mathbb{E}_{(c,x)\sim \mu} cf(x) + \delta_n 2\mathbb{E}_{\hat{\mu}} ||\nabla_{(c,x)}  cf(x)||+\varepsilon_n\\
&\leq \inf_{f\in \mathcal{F}} -\mathbb{E}_{\mu_{y}}f + \mathbb{E}_{\mu_{y'}}f + \delta_{n} \mathbb{E}_{p_y} (||\nabla_x f(x)||+ |f(x)|) +\delta_n \mathbb{E}_{\mu_{y'}}(||\nabla_x f(x)||+ |f(x)|)+\varepsilon_n \\
&\leq \inf_{f \in \mathcal{F}} - \mathbb{E}_{\mu_{y}}(f(x)- \delta_n ||\nabla_x f(x)||) + \mathbb{E}_{\mu_{y'}}(f(x)+ \delta_n ||\nabla_x f(x)||)  + \delta_n M + \varepsilon_n
\end{align*}

Let $g(x)=\frac{f_{y}(x)-f_{y'}(x)}{2L}$ we have:
\begin{align*}
&\inf_{f \in \mathcal{F}} - \mathbb{E}_{\mu_{y}}(f(x)- \delta_n ||\nabla_x f(x)||) + \mathbb{E}_{\mu_{y'}}(f(x)+ \delta_n ||\nabla_x f(x)||)\\
&\leq - \mathbb{E}_{\mu_{y}}(g(x)- \delta_n ||\nabla_x g(x)||) + \mathbb{E}_{\mu_{y'}}(g(x)+ \delta_n ||\nabla_x g(x)||)\\
&= -\mathbb{E}_{\mu_{y}}(g(x)- \delta_n ||\nabla_x g(x)||) - \mathbb{E}_{\mu_{y'}}(-g(x)- \delta_n ||\nabla_x g(x)||)\\
&\leq \frac{-2\gamma}{2L}=\frac{-\gamma}{L}.
\end{align*}
It follows that there exists a robust classifier between the two classes $y,y'$:
\begin{align*}
\inf_{f\in \mathcal{F}}\sup_{\mu , \W_{\infty}(\mu, \hat{\mu})\leq \delta_n}-2\mathbb{E}_{(c,x)\sim \mu} cf(x) \leq -\frac{\gamma}{L}+\delta_n M+ \varepsilon_n 
\end{align*}
Note that: 
\begin{align*}
-\inf_{f\in \mathcal{F}}\sup_{\mu , \W_{\infty}(\mu, \hat{\mu})\leq \delta_n}-2\mathbb{E}_{(c,x)\sim \mu} cf(x)
&= \sup_{f\in \mathcal{F}}\inf_{\mu , \W_{\infty}(\mu, \hat{\mu})
\leq \delta_n}2\mathbb{E}_{(c,x)\sim \mu} cf(x)
\end{align*}
Hence:
\[\sup_{f\in \mathcal{F}}\inf_{\mu , \W_{\infty}(\mu, \hat{\mu})
\leq \delta_n}2\mathbb{E}_{(c,x)\sim \mu} cf(x) \geq \frac{\gamma}{L}-\delta_n M-\varepsilon_n. \]

On the other hand we have:
\[\sup_{f\in \mathcal{F}}\sup_{\mu , \W_{\infty}(\mu, \hat{\mu})
 \leq \delta_n}2\mathbb{E}_{(c,x)\sim \mu} cf(x)\geq \sup_{f\in \mathcal{F}}\inf_{\mu , \W_{\infty}(\mu, \hat{\mu})\leq
 \delta_n}2\mathbb{E}_{(c,x)\sim \mu} cf(x) \geq \frac{\gamma}{L}-\delta_n M-\varepsilon_n.\]
Note that $\mathcal{F}\subset Lip_1$
\[\sup_{f\in Lip_1}\sup_{\mu , \W_{\infty}(\mu, \hat{\mu}) \leq 
 \delta_n}2\mathbb{E}_{(c,x)\sim \mu} cf(x) \geq \sup_{f\in \mathcal{F}}\sup_{\mu , \W_{\infty}(\mu, \hat{\mu})
 \leq \delta_n}2\mathbb{E}_{(c,x)\sim \mu} cf(x)\geq \frac{\gamma}{L}-\delta_n M-\varepsilon_n. \]
We can now swap the two sups and obtain:
\begin{align*}
&\sup_{f\in Lip_1}\sup_{\mu , \W_{\infty}(\mu, \hat{\mu}) \leq 
 \delta_n}2\mathbb{E}_{(c,x)\sim \mu} cf(x) 
 \\
 = &\sup_{\mu , \W_{\infty}(\mu, \hat{\mu}) \leq 
 \delta_n}\sup_{f\in Lip_1} 2\mathbb{E}_{(c,x)\sim \mu} cf(x) =\sup_{\mu , \W_{\infty}(\mu, \hat{\mu}) \leq 
 \delta_n} W_1(\mu_{y},\mu_{y'}) 
 \end{align*}
 and finally we have:
 \[\sup_{\mu , \W_{\infty}(\mu, \hat{\mu}) \leq 
 \delta_n} \W_1(\mu_{y},\mu_{y'})\geq \frac{\gamma}{L}-\delta_n M-\varepsilon_n. \]
 \end{proof}

\end{document}